\documentclass[11pt]{article}

\usepackage{amsmath,amssymb,amsthm,mathtools}
\usepackage{amsfonts}
\usepackage{bbm}
\usepackage{algorithm}
\usepackage{algorithmic}
\usepackage{booktabs}
\usepackage{wrapfig}
\usepackage{graphicx}
\usepackage{enumitem}
\usepackage{multirow}
\usepackage{url}
\usepackage{nicefrac}
\usepackage{arxiv}
\usepackage{cleveref}

\newtheorem{theorem}{Theorem}
\newtheorem{lemma}{Lemma}
\newtheorem{assumption}{Assumption}

\newtheorem{corollary}{Corollary}

\newcommand{\E}{\mathbb{E}}

\newcommand{\KL}{\mathrm{KL}}

\newcommand{\ustar}{u^\star}

\title{Inverse Reinforcement Learning with Just Classification and a Few Regressions
}

\author{%
  Lars van der Laan \\
  Department of Statistics, University of Washington \\
  \texttt{lvdlaan@uw.edu}
  \And
  Nathan Kallus \\
  Netflix \\
  Cornell University \\
  \texttt{nkallus@cornell.edu}
  \And
  Aurelien Bibaut \\
  Netflix \\
  \texttt{abibaut@netflix.com}
}

\begin{document}

\maketitle
\begin{abstract}
Inverse reinforcement learning (IRL) aims to infer rewards from observed
behavior, but rewards are not identified from the policy alone: many
reward--value pairs can rationalize the same actions. Meaningful reward
recovery therefore requires a normalization, yet existing normalized IRL methods
often rely on anchor-action restrictions or specialized neural architectures.
We study reward recovery in the maximum-entropy, or Gumbel-shock, model under a
broad class of statewise affine normalizations, with anchor-action constraints
as a special case. This yields \emph{Generalized Policy-to-\(Q\)-to-Reward}
(GenPQR), a modular procedure that estimates the behavior policy, evaluates its
soft \(Q\)-function through the Bellman equation, and recovers the normalized
reward. Both stages can be implemented with off-the-shelf classification and
regression methods. We prove modular finite-sample guarantees under general function approximation,
with separate policy-estimation and \(Q\)-estimation errors. As a concrete instantiation, we study GenPQR with fitted \(Q\)-evaluation,
reducing IRL to policy estimation followed by regression. Experiments show that
GenPQR matches or improves reward recovery relative to DeepPQR while remaining
simpler and more modular. Compared with DeepPQR, our theory goes beyond anchor
actions, accommodates large and continuous action spaces, makes coverage
requirements explicit, and is not tied to a specific neural-network architecture
or training procedure.
\end{abstract}

\section{Introduction}
\label{sec:motivation}

Behavioral data are abundant in robotics, economics, healthcare, and
human--computer interaction. Inverse reinforcement learning (IRL) seeks to
explain such behavior by recovering a reward under which the observed policy is
optimal. Classical IRL often treats agents as exactly optimal
\citep{ng2000algorithms,abbeel2004apprenticeship}, but this deterministic view
can miss the variability in real behavior. A common alternative is stochastic
choice, for example through entropy regularization, which yields softmax-type
policies
\citep{ziebart2008maximum,ziebart2010modeling,haarnoja2017reinforcement}. In
maximum-entropy (MaxEnt) IRL \citep{ziebart2008maximum}, closely related to
dynamic discrete-choice (DDC) models with i.i.d.\ Gumbel shocks
\citep{rust1987optimal}, the observed policy has a softmax form induced by an
unknown reward and continuation value.

Even in the softmax setting, rewards are only partially identified: different
reward--value pairs can induce the same behavior policy through potential-based
shaping transformations
\citep{ng1999policy,cao2021identifiability,skalse2023invariance,skalse2024partial}.
Thus policy fit alone does not imply reward recovery. Existing methods obtain
unique rewards only by adding restrictions, often implicitly: MaxEnt IRL
identifies an equivalence class unless the reward class is restricted, for
example to be linear \citep{ziebart2008maximum}; adversarial IRL requires
strong conditions for reward recovery, such as state-only rewards and
deterministic transitions \citep{fu2018airl}; and other neural IRL objectives
depend on their chosen reward parameterization
\citep{levine2011nonlinear,wulfmeier2015deep,ho2016gail,
snoswell2020revisiting}. This motivates normalized reward recovery, a classical idea in DDC/econometrics:
impose an explicit identifying normalization and recover the unique
reward--value pair satisfying it
\citep{rust1987optimal,hotz1993conditional,aguirregabiria2010dynamic,
geng2020deep}.

Building on classical DDC for discrete actions, Deep Policy-to-\(Q\)-to-Reward
(DeepPQR) \citep{geng2020deep} operationalizes this idea under anchor-action
normalization. It reduces reward recovery to a Bellman-type fixed point
constructed from the observed behavior policy: estimate the policy, possibly
using imitation learning or an IRL method such as adversarial IRL; estimate the
associated \(Q\)-function with value-based offline RL tools; and recover rewards
from the normalized Bellman equation. Crucially, DeepPQR uses only the policy
implied by the upstream method, not its learned reward, so recovery does not
require the upstream reward model to be correct. However, its identification
strategy is tied to a fixed anchor action: one must specify a well-supported
reference action, typically a do-nothing action, whose reward is known. This can
be restrictive when actions are continuous, weakly supported, or lack a
canonical reference option. Many applications instead call for more flexible
normalizations, such as fixing mean rewards, using state-dependent anchors, or
imposing value-based constraints.

We generalize this policy-to-\(Q\)-to-reward perspective to statewise affine
normalizations, making the identifying restriction explicit and
problem-dependent rather than fixed by an anchor-action convention. Given an
estimated behavior policy, GenPQR solves the corresponding Bellman fixed point
for \(Q\) and recovers the normalized reward directly from \(Q\). Thus, it
preserves the modular appeal of DeepPQR while accommodating a broader class of
normalizations. This shifts the role of normalization from an algorithm-specific
anchor choice to an explicit modeling choice that can reflect the application.

\textbf{Our contributions.}
First, we characterize reward identification in maximum-entropy IRL,
equivalently the Gumbel-shock discrete-choice model. We show that behavior
identifies only an equivalence class of reward--value pairs, and introduce
statewise affine normalizations that select a unique representative. This
generalizes the anchor-action normalization of DeepPQR \citep{geng2020deep} and
clarifies when exact reward recovery is possible, and when it is unnecessary
for policy comparison.

Second, this characterization yields a general identification strategy and a
modular recovery procedure. Under any statewise affine normalization, recovering
the normalized reward reduces to estimating the behavior policy and solving a
linear Bellman fixed point for an associated \(Q\)-function; the normalized
reward--value pair is then obtained directly from \(Q\). This gives
\emph{Generalized Policy-to-\(Q\)-to-Reward} (GenPQR), which treats reward
recovery as a post-processing step based on \(Q\)-evaluation after policy
estimation, rather than as a specialized joint IRL objective. In the
fixed-anchor neural-network setting, FQE-based GenPQR specializes to a simpler
version of DeepPQR. Beyond this special case, the same reduction supports
state-dependent anchors, mean-reward and value normalizations, and large or
continuous action spaces through the choice of policy and \(Q\)-estimation
methods.

Third, we prove finite-sample guarantees for GenPQR under general function
approximation. The bounds combine with any policy estimator and any
\(Q\)-function estimator; for FQE, they separate policy-estimation,
Bellman-approximation, statistical, and iteration errors. Compared with
DeepPQR, the theory avoids sup-norm policy-error assumptions, makes coverage
explicit, does not require Bellman completeness, and is not tied to a specific
neural-network architecture or training procedure.

\subsection{Related Work}

\textbf{Identifiability, shaping, and anchor-action methods.}
Rewards in MaxEnt IRL are only partially identified because behavior is
invariant under potential-based shaping
\citep{ng1999policy,fu2018airl,cao2021identifiability,skalse2023invariance,
skalse2024partial}. Our work is closest to DeepPQR \citep{geng2020deep}, which
studies anchor-action normalization and gives finite-sample guarantees for a
specific neural-network procedure. We extend the anchor-action view to general
affine normalizations, clarify the identification structure, simplify the
recovery step (Section~\ref{sec:solve-normalization}), and develop theory under
general function approximation.

\textbf{MaxEnt IRL and adversarial imitation learning.}
Maximum-entropy IRL fits stochastic policies induced by soft Bellman equations,
often under structured reward parameterizations
\citep{ziebart2008maximum,ziebart2010modeling,levine2011nonlinear,
wulfmeier2015deep,zeng2022maximum}. Adversarial methods such as GAIL and AIRL
are effective for imitation learning and behavior-policy estimation through
joint reward-policy optimization \citep{ho2016gail,fu2018airl}. As noted by
\citet{geng2020deep}, however, they generally do not resolve reward
nonidentifiability without stronger assumptions, such as state-only rewards. We
instead separate behavior-policy estimation from normalized reward recovery,
allowing action-dependent rewards. Thus, methods designed primarily to reproduce
behavior can still serve as the first stage of a modular reward-recovery
procedure.

\textbf{Entropy-regularized RL and control as inference.}
Our analysis is also connected to entropy-regularized control and the
control-as-inference perspective
\citep{kappen2005linear,todorov2009efficient,levine2018rlasinf}, including
path-consistency learning \citep{nachum2017pcl}, soft actor-critic
\citep{haarnoja2018sac}, and entropy-regularized offline RL
\citep{haarnoja2017reinforcement,uehara2023offline}. We use the same soft
Bellman structure, but for the inverse problem: the behavior policy is assumed
to solve an entropy-regularized control problem for an unknown reward, and the
goal is to recover that reward.

\textbf{Value-based offline RL.}
The recovery step is closely connected to value-based offline RL. Fitted
\(Q\)-iteration and fitted \(Q\)-evaluation estimate Bellman fixed points by
regression \citep{ernst2005tree,munos2008finite,mnih2013playing,van2025fitted},
while minimax and critic-based methods relax completeness assumptions
\citep{uehara2020minimax,imaizumi2021minimax,uehara2023offline,xie2020q,
xie2021batch,zhan2022offline}. We do not introduce a new RL update rule;
instead, we show that normalized reward recovery reduces to solving a linear
fixed-point equation with existing tools. Our finite-sample analysis separates
first-stage policy-estimation error from second-stage value-learning error.

\section{Problem Setup}
\label{sec:background}

We consider a discounted MDP with state space \(\mathcal S\), finite or
continuous action space \(\mathcal A\), transition kernel \(P\), reward
\(r^\dagger:\mathcal S\times\mathcal A\to\mathbb R\), and discount
\(\gamma\in[0,1)\). Let \(\pi(a\mid s)\) be the behavior policy and \(\rho\)
the sampling distribution over states. We observe transitions
\(\{(s_i,a_i,s_i')\}_{i=1}^n\) with
\[
s_i\sim \rho,\qquad
a_i\sim \pi(\cdot\mid s_i),\qquad
s_i'\sim P(\cdot\mid s_i,a_i).
\]
Equivalently, \((s_i,a_i)\sim \nu_\pi\), where
\(\nu_\pi(ds,a):=\rho(ds)\pi(a\mid s)\). Thus \(P\) is identified from the
observed dynamics, whereas the reward is not. We use finite-action notation in
the main text, following \citet{rust1987optimal,geng2020deep}; continuous
actions replace sums and softmax normalizers by integrals and Boltzmann
densities with respect to a reference measure
(Appendix~\ref{app:continuous-actions}). For any state-action function \(f\),
write
\[
(\mu f)(s):=\sum_a \mu(a\mid s) f(s,a).
\]
 The \textbf{goal} of IRL is to recover a reward \(r\) for which \(\pi\) is
optimal in the MDP \((\mathcal S,\mathcal A,P,r,\gamma)\), in an appropriate
sense. We review the MaxEnt IRL setting from structural discrete-choice and
maximum-entropy perspectives, then formulate the optimization problem central to
our analysis.

\textbf{From dynamic discrete choice to MaxEnt IRL.}
We adopt the dynamic discrete-choice formulation
\citep{rust1987optimal,hotz1993conditional,aguirregabiria2010dynamic}. At time
\(t\), an agent in state \(s_t\) who takes action \(a_t\) receives utility
\(r^\dagger(s_t,a_t)+\varepsilon_t(a_t)\), where \(r^\dagger\) is the unknown
mean reward and \(\varepsilon_t(a)\) is an idiosyncratic shock with known
distribution. Let \(V^\dagger(s)\) be the optimal ex ante value, define
\(Pf(s,a):=E\{f(s')\mid s,a\}\), and set
\[
Q^\dagger(s,a):=r^\dagger(s,a)+\gamma PV^\dagger(s,a),
\qquad
\Xi f(s):=\log\sum_a e^{f(s,a)}.
\]
Under i.i.d.\ Gumbel type-I extreme-value shocks, the optimal policy is softmax:
\[
\pi^\dagger(a\mid s)\propto \exp\{Q^\dagger(s,a)/\tau\},
\]
for temperature \(\tau>0\), and
\[
Q^\dagger = r^\dagger+\gamma P\Xi Q^\dagger,
\qquad
V^\dagger(s)=\Xi Q^\dagger(s).
\]
Equivalently, with state-action continuation value
\(v^\dagger:=P\Xi Q^\dagger\),
\[
\pi^\dagger(a\mid s)
\propto
\exp\{(r^\dagger(s,a)+\gamma v^\dagger(s,a))/\tau\},
\qquad
v^\dagger=P\Xi(r^\dagger+\gamma v^\dagger).
\]
We call the latter the soft Bellman equation. Without loss of generality, we
set \(\tau=1\) and absorb the scale into \(r^\dagger\).

 An equivalent perspective comes from maximum-entropy IRL
\citep{ziebart2008maximum,ziebart2010modeling}, where the agent maximizes
expected discounted reward plus an entropy bonus for stochastic action
selection. This yields the same optimal policy \(\pi^\dagger\) and soft Bellman
equation, with \(V^\dagger\) and \(Q^\dagger\) the entropy-regularized value
and \(Q\)-functions \citep{haarnoja2017reinforcement}. Thus, despite different
motivations, dynamic discrete choice and MaxEnt IRL reduce to the same
mathematical object: a soft Bellman system with a softmax policy.

\textbf{Partial identification and the role of normalization.}
Our goal is therefore to recover a reward function whose induced soft-optimal
policy best matches the observed behavior, for example by minimizing the
state-averaged Kullback--Leibler divergence from \(\pi(\cdot\mid s)\) to
\(\pi^\star(\cdot\mid s)\). However, matching the policy does not in general
identify the reward. Under a softmax policy, adding a state-dependent offset to
all action values leaves the policy unchanged, so many rewards induce the same
behavior policy
\citep{ziebart2010modeling,cao2021identifiability}. The reward is therefore
only \emph{partially identified}, up to an equivalence class, and selecting a
unique representative requires a normalization constraint \citep{rust1987optimal,geng2020deep}.

We use \textbf{statewise affine normalizations}, which generalize standard
choices such as anchor-action, outside-option, and sum-to-zero constraints. Let
\(\mu(\cdot\mid s)\) be a reference distribution over actions and
\(g:\mathcal S\to\mathbb R\) a specified anchor function. We impose
\begin{equation}
\sum_a \mu(a\mid s) r(s,a)=g(s)
\qquad \text{for all } s,
\end{equation}
or equivalently \(\mu r=g\), with \(\sup_s |g(s)|<\infty\). Thus, the
\(\mu\)-average reward at each state is fixed at \(g(s)\), with \(\mu\) and
\(g\) typically chosen from domain knowledge.

Such constraints are standard in economics and are often substantively
meaningful \citep{hotz1993conditional,bajari2010identification}. The fixed
anchor-action constraint \(r(s,a^\dagger)=g(s)\) is the special case
\(\mu(a\mid s)=1\{a=a^\dagger\}\) \citep{geng2020deep}; when \(g\equiv0\), it
reduces to the classical zero-reward normalization \(r(s,a^\dagger)=0\) used in
Rust's engine-replacement model \citep{rust1987optimal}. More generally, taking
\(\mu(a\mid s)=1\{a=a^\dagger(s)\}\) allows state-specific anchors, where the
reference action may vary with the state. DDC models often use such anchors to
normalize the payoff of a no-action, status-quo, or outside option to be zero
or otherwise known
\citep{rust1987optimal,hotz1993conditional,aguirregabiria2010dynamic,
geng2020deep}. If \(\mu(a\mid s)=1/|\mathcal A|\), the constraint fixes the
statewise average reward and, when \(g\equiv0\), becomes a sum-to-zero
constraint \citep{kallus2016revealed}.

The framework also permits data-driven choices of \((\mu,g)\). A statewise
value constraint is a special case: if \(V_r^\mu\) is the value function of
policy \(\mu\) under reward \(r\), then imposing \(V_r^\mu=h\) is equivalent,
by the Bellman equation \(V_r^\mu=\mu r+\gamma\mu P V_r^\mu\), to taking
\(g:=h-\gamma\mu P h\). For example, in medical decision problems, one may wish
to normalize rewards relative to standard care using historical outcome data
\citep{kallus2018removing}. If an auxiliary dataset records states and realized
outcomes \((s_i,y_i)\) from a population with the same reward function and
behavior policy \(\pi\), then
\(g(s)=E[y\mid s]=\sum_a \pi(a\mid s)r(s,a)\). Taking \(\mu=\pi\) therefore
yields a natural data-driven normalization.

\textbf{Main problem:} Putting these pieces together, we obtain the following
constrained maximum-likelihood problem for recovering \((r^\star, v^\star)\):
\begin{equation}
\label{eq:main-irl}
\begin{aligned}
\arg\max_{r,v}\quad
& \E_{(s,a) \sim \nu_{\pi}}\!\left[
  r(s,a) + \gamma v(s,a) - \exp\{\Xi(r+\gamma v)(s)\}
\right] \\
\text{s.t.}\quad
& v = P\Xi(r+\gamma v), \qquad\qquad\text{\color{gray}(soft Bellman)} \\
& \mu r = g.
\qquad\text{\color{gray}(affine normalization)}
\end{aligned}
\end{equation}
That is, we maximize the conditional log-likelihood over state--action
functions \(r\) and \(v\) subject to the soft Bellman equation and a statewise
affine normalization constraint.

The remainder of the paper shows that solving \eqref{eq:main-irl} reduces to two steps: estimate \(\pi\), then solve a linear Bellman evaluation equation to recover the unique pair \((r^\star, v^\star)\). We then turn this characterization into a simple algorithm.

\section{From partial to point identification}
\label{sec:char}

To expose the core structure of the problem, we first drop the normalization constraint and study the resulting relaxed optimization problem.

\textbf{Relaxed problem:} We remove the reward normalization in
\eqref{eq:main-irl} but keep soft Bellman consistency:
\begin{equation}
\label{eq:relaxed-irl}
\begin{aligned}
\arg\max_{r,v}\quad
& \E_{(s,a) \sim \nu_{\pi}}\!\left[
  r(s,a) + \gamma v(s,a) - \exp\{\Xi(r+\gamma v)(s)\}
\right] \\
\text{s.t.}\quad
& v = P\Xi(r+\gamma v).
\qquad {\color{gray}(\text{soft Bellman})}
\end{aligned}
\end{equation}

This problem is highly non-unique. Because the objective is invariant under
potential-based transformations of the reward, the relaxed problem admits an
entire equivalence class of solutions. We will exploit this invariance later to
recover a solution to the normalized problem \eqref{eq:main-irl} from a single
convenient representative.

\subsection{Behavior cloning solves the relaxed problem}
\label{sec:easy-solution}

Without normalization, the relaxed problem admits a particularly simple
solution: estimate the behavior policy \(\pi\)  and set
\(r(s,a)=\log \pi(a\mid s)\) and \(v(s)=0\). This pair maximizes the
conditional log-likelihood and satisfies the soft Bellman equation. Indeed, $P\Xi(r+\gamma v) = 0$ since $\exp \{\Xi(r+\gamma v)(s)\} = \sum_a \exp\{r(s,a)\}
=
\sum_a \pi(a\mid s)
=
1.$
We will use the shorthand
\[
u^\star(s,a):=\log \pi(a\mid s).
\]
\begin{lemma}[Trivial optimum of the relaxed problem]
\label{lem:trivial}
The pair \((r,v) := (u^\star,0)\) solves \eqref{eq:relaxed-irl}.
\end{lemma}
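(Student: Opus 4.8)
The plan is to establish optimality in the constrained problem \eqref{eq:relaxed-irl} by checking two things separately: that the candidate $(\ustar,0)$ is \emph{feasible}, and that it attains the \emph{global} maximum of the objective over all pairs $(r,v)$, feasible or not. Because the upper bound I intend to prove holds unconditionally, I never have to characterize the full soft-Bellman constraint set; I only need to verify feasibility at the single candidate point.

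First I would verify the soft Bellman constraint $v=P\Xi(r+\gamma v)$ at $(r,v)=(\ustar,0)$. Substituting, the right-hand side is $P\Xi(\ustar)$, and for each state the inner log-sum-exp is $\Xi(\ustar)(s)=\log\sum_{a'}\exp(\log\pi(a'\mid s))=\log\sum_{a'}\pi(a'\mid s)=\log 1=0$ by the policy-normalization identity $\sum_{a}\pi(a\mid s)=1$. Hence $P\Xi(\ustar)\equiv 0 = v$, so the candidate is feasible.

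The core of the argument is a reparametrization by $Q:=r+\gamma v$. The objective depends on $(r,v)$ only through $Q$, since its integrand is $Q(s,a)-\Xi Q(s)=\log\big(\exp(Q(s,a))/\sum_{a'}\exp(Q(s,a'))\big)=:\log\pi_Q(a\mid s)$, where $\pi_Q(\cdot\mid s)$ is the softmax policy induced by $Q(s,\cdot)$. Thus the objective equals $\E_{s}\sum_a\pi(a\mid s)\log\pi_Q(a\mid s)$, an averaged cross-entropy between $\pi$ and $\pi_Q$. By Gibbs' inequality, equivalently nonnegativity of $\KL\big(\pi(\cdot\mid s)\,\|\,\pi_Q(\cdot\mid s)\big)$, for every $s$ this is at most $\sum_a\pi(a\mid s)\log\pi(a\mid s)$, with equality iff $\pi_Q(\cdot\mid s)=\pi(\cdot\mid s)$. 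Taking $\E_{s\sim P}$ yields a uniform upper bound on the objective, valid for all $(r,v)$, namely $\E_{s}\sum_a\pi(a\mid s)\log\pi(a\mid s)$ (the negative averaged conditional entropy of $\pi$). Finally I would check the candidate saturates this bound: at $(\ustar,0)$ we have $Q=\ustar=\log\pi$, so $\pi_Q(a\mid s)=\pi(a\mid s)/\sum_{a'}\pi(a'\mid s)=\pi(a\mid s)$, making the cross-entropy equal the upper bound. Since $(\ustar,0)$ is feasible and attains the global maximum, it solves \eqref{eq:relaxed-irl}.

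There is no serious obstacle here; the lemma is essentially routine once the cross-entropy structure is exposed. The only point deserving care is to present the upper bound as holding for arbitrary $(r,v)$ via the $Q$-reparametrization, so that the constraint enters only through the one-line feasibility check rather than through a more delicate constrained-optimization argument.
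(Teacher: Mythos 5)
Your proposal is correct and follows essentially the same route as the paper's proof: verify feasibility via $\Xi(\ustar)(s)=\log\sum_{a'}\pi(a'\mid s)=0$, then observe that the objective depends only on $Q=r+\gamma v$ through the induced softmax policy and is an averaged cross-entropy maximized (by Gibbs'/KL nonnegativity) exactly when that policy equals $\pi$, which $(\ustar,0)$ achieves. The only cosmetic difference is that you phrase the bound via Gibbs' inequality while the paper writes the identical decomposition as $-\E_s\,\KL(\pi\|\pi_Q)+\E_s H(\pi)$.
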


A related observation appears in Section~4 of \citet{fu2018airl}, but not as a
tool for identification or estimation. Here, by contrast, it is the starting
point for normalization-based identification. As shown in
Section~\ref{sec:aside}, this trivial solution already suffices for policy-value
comparisons.

\subsection{An invariance among solutions}
\label{sec:invariance}
The relaxed problem is invariant to \emph{potential-based shaping}: adding a state-only potential $c:\mathcal{S} \to \mathbb{R}$ shifts all logits in the softmax by the same amount per state, leaving both feasibility and likelihood unchanged. This is the entropy-regularized analogue of reward shaping in classical RL \citep{ng1999policy} and explains why the relaxed objective is flat along an affine subspace.  

\begin{lemma}[Potential-based shaping invariance]
\label{lem:shaping}
Let \((r,v)\) be feasible for \eqref{eq:relaxed-irl}, and let
\(c:\mathcal S\to\mathbb R\) be arbitrary. Define
\(\tilde r = r + c - \gamma Pc\) and \(\tilde v = v + Pc\).
Then \((\tilde r,\tilde v)\) is also feasible for \eqref{eq:relaxed-irl} and
attains the same objective value as \((r,v)\). In particular, the induced
log-policy
\(u^\star(s,a)=r(s,a)+\gamma v(s,a)-\Xi(r+\gamma v)(s)\) is unchanged.
\end{lemma}

Related partial-identification results appear in
\citet[Theorem~1]{cao2021identifiability}, \citet{fu2018airl}, and
\citet[Lemma~2]{geng2020deep}. We next show that our normalization
selects a unique representative from this class.
 
\subsection{Solving the original normalized problem}
\label{sec:solve-normalization}

Lemma~\ref{lem:trivial} gives one relaxed optimum, \((u^\star,0)\), and
Lemma~\ref{lem:shaping} characterizes all others via potential-based
transformations of the form \((r,v) = (u^\star + c - \gamma Pc, Pc)\). Since
the constrained and relaxed problems have the same optimal value, solving
\eqref{eq:main-irl} amounts to finding the shaping function \(c\) such that
the corresponding pair \((r,v)\) satisfies the desired constraint. For our
normalization \(\mu r = g\), \(c\) is uniquely determined by a Bellman
equation.

The next result gives the corresponding solution in terms of
\(Q^\mu_{u^\star-g}\), the \(Q\)-function under reward \(u^\star-g\) and policy
\(\mu\), where \(u^\star=\log \pi\). Recall that
\[
(P\mu Q)(s,a):=\mathbb{E}_{s' \sim P(\cdot\mid s,a),\, a' \sim \mu(\cdot\mid s')}
\bigl[Q(s',a')\bigr].
\]

\begin{theorem}[IRL via a Bellman equation]
\label{thm:unique}
Let \(Q^\mu_{u^\star-g}\) be the unique bounded solution to
\[
Q^\mu_{u^\star-g}(s,a) = u^\star(s,a)-g(s) + \gamma (P \mu Q^\mu_{u^\star-g})(s,a).
\]
Then \eqref{eq:main-irl} admits a unique optimal solution \((r^\star,v^\star)\), given by
\begin{align*}
r^\star(s,a)
    &= Q^\mu_{u^\star-g}(s,a)
    - (\mu Q^\mu_{u^\star-g})(s)
    + g(s), \\
v^\star(s,a)
    &= \frac{1}{\gamma}\bigl(
    u^\star(s,a) - g(s) - Q^\mu_{u^\star-g}(s,a)
    \bigr).
\end{align*}
\end{theorem}

This theorem is the main result of the paper. It shows that normalized reward
recovery reduces to two steps: first estimate \(u^\star = \log \pi\), then solve
the linear Bellman equation for \(Q^\mu_{u^\star-g}\). The normalized reward is then obtained in closed form as the advantage function
\(Q^\mu_{u^\star-g} - \mu Q^\mu_{u^\star-g}\), shifted by \(g\). By Lemma~\ref{lem:shaping}, every feasible
representative in the shaping class induces the same \(u^\star\). Therefore,
once one feasible representative is known, we can impose a different
normalization without re-solving the original IRL problem.

\textbf{Comparison to DeepPQR.} In the anchor-action setting
\(\mu(a \mid s)=1\{a=a^\dagger\}\), DeepPQR \citep{geng2020deep} first
estimates the behavior policy and then learns the anchored value
\(W(s):=Q^\mu_{u^\star-g}(s,a^\dagger)\), which is used to reconstruct the full
\(Q\)-function and hence the normalized reward. In our notation,
\(Q^\mu_{u^\star-g}\) satisfies
\[
Q^\mu_{u^\star-g}(s,a)
=
u^\star(s,a)-g(s)+\gamma \E\!\left[Q^\mu_{u^\star-g}(s',a^\dagger)\mid s,a\right],
\]
so it is fully determined by \(W\), where
\[
W(s)=u^\star(s,a^\dagger)-g(s)+\gamma \E[W(s')\mid s,a^\dagger].
\]
Thus, DeepPQR estimates the same target through the intermediate quantity
\(W\). Our formulation makes this explicit by working directly with the single
\(Q\)-function \(Q^\mu_{u^\star-g}\), from which both the normalized reward and
the continuation value \(v^\star\) follow immediately. This removes the extra
step of estimating \(PW\), yielding a simpler and more modular second stage.
It also highlights a practical tradeoff: DeepPQR is tied to the anchor action
\(a^\dagger\), which may be unstable when observations under \(a^\dagger\) are
limited, whereas our formulation estimates the full \(Q\)-function directly and
can borrow strength across actions through any suitable \(Q\)-learning method.
This matches our experiments: direct estimation in GenPQR performs better when
anchor actions are rare or the action space is large, whereas DeepPQR's
anchor-action regression becomes unstable with rare anchors and has no direct
continuous-action analogue.

\subsection{Behavior cloning suffices for policy comparison}
\label{sec:aside}

For policy comparison, exact reward recovery is unnecessary once the transition
kernel \(P\) and discount factor \(\gamma\) are fixed. Let
\(V_r^\pi=\pi Q_r^\pi\) denote the value function under reward \(r\) and policy
\(\pi\), where \(Q_r^\pi\) is the corresponding \(Q\)-function.

\begin{theorem}[Identification of policy value differences]
\label{thm:identification}
Let \((r,v)\) solve \eqref{eq:relaxed-irl}, for example \((u^\star,0)\). Then, 
for any two policies \(\pi_1,\pi_2\), $V_{r^\star}^{\pi_1}(s)-V_{r^\star}^{\pi_2}(s)
 =
V_r^{\pi_1}(s)-V_r^{\pi_2}(s).$
\end{theorem}

Thus, exact reward recovery is needed only for targets that depend on the
normalization itself, such as evaluation under counterfactual transition
dynamics or discount factors, or interpretation of structural features of
\(r\). Otherwise, \(u^\star\), or any reward solving the relaxed problem,
suffices for policy comparison. This parallels \citet{hotz1993conditional},
where under Gumbel shocks value differences are identified from log-odds of
observed choices.

\section{A generic algorithm}
\label{sec:genalg}
Algorithm~\ref{alg:generic-irl} presents Generalized
Policy-to-\(Q\)-to-Reward (GenPQR) for reward recovery. The method has two
standard steps. First, we estimate the behavior policy \(\pi(a \mid s)\),
equivalently \(u^\star(s,a)=\log \pi(a \mid s)\), yielding \(\hat u\).
Second, we recover \((r^\star,v^\star)\) by estimating the \(Q\)-function of
policy \(\mu\) under reward \(\hat u-g\), then applying the plug-in formula in
Theorem~\ref{thm:identification}. Appendix~\ref{app:continuous-actions} gives
the continuous-action version: estimate a conditional log-density, solve the
same \(Q\)-fixed point with \(\mu\)-integrals, and apply the same normalized
advantage formula. When \(\hat \pi(a\mid s)\) is
close to \(0\), \(\hat u\) may be unstable;
in practice, one may clip \(\hat \pi\) away from zero
\citep{ionides2008truncated}. Thus, in our setting, IRL reduces to
behavior-policy estimation plus offline policy evaluation, rather than a
fundamentally new estimation problem. The reward is identified only up to scale; if the scale is known, the recovered reward can be rescaled accordingly; see Appendix C.2.3 of \citet{geng2020deep}.

The procedure is modular and black-box. The first stage can use any
probabilistic classifier, behavior-cloning, imitation-learning, or IRL method
trained on the observed data, including MaxEnt IRL and adversarial IRL. The
second stage can use a range of existing offline RL methods, including
temporal-difference learning \citep{tsitsiklis1996analysis}, fitted
\(Q\)-evaluation (FQE) \citep{munos2008finite,van2025fitted}, and minimax or
saddle-point \(Q\)-learning methods
\citep{uehara2020minimax,imaizumi2021minimax,xie2020q,xie2021batch}. This
separation lets each stage leverage existing methods and remain compatible with
techniques for handling misspecification or distribution shift
\citep{fujimoto2019off,agarwal2021deep,chen2019information,foster2021offline}.
Moreover, given any feasible reward--value pair under one normalization,
GenPQR can recover the corresponding reward under another normalization without
re-solving the original IRL problem.
\begin{figure}[t]
\centering
\footnotesize

\begin{minipage}[t]{0.48\linewidth}
\begin{algorithm}[H]
\caption{\textsc{Generalized Policy-to-\(Q\)-to-Reward (GenPQR)}}
\label{alg:generic-irl}
\begin{algorithmic}[1]
\INPUT Transitions \(\{(s_i,a_i,s_i')\}_{i=1}^n\), normalization
\(\mu(a \mid s)\), anchor \(g(s)\), discount \(\gamma\)
\STATE \textbf{Policy estimation:} fit
\(\hat u(s,a)\approx \log \pi(a\mid s)\) using classification, behavior
cloning, or IRL
\STATE \textbf{\(Q\)-evaluation:} solve
\[
\hat Q(s,a)\approx \hat u(s,a)-g(s)+\gamma (P\mu \hat Q)(s,a)
\]
using any approximate dynamic-programming or policy-evaluation method
\OUTPUT \textbf{Reward:}
\[
\hat r(s,a)=\hat Q(s,a)-\sum_{a'}\mu(a'\mid s)\hat Q(s,a')+g(s)
\]
\end{algorithmic}
\end{algorithm}
\end{minipage}
\hfill
\begin{minipage}[t]{0.48\linewidth}
\begin{algorithm}[H]
\caption{\textsc{Fitted $Q$ Evaluation} for \(Q^\mu_{\hat u-g}\)}
\label{alg:simple-irl}
\begin{algorithmic}[1]
\INPUT Transitions \(\{(s_i,a_i,s_i')\}_{i=1}^n\), log-policy \(\hat u\),
normalization \(\mu(a\mid s)\), anchor \(g(s)\), discount \(\gamma\), class
\(\mathcal{F}\), iterations \(K\)
\STATE Initialize \(\hat Q^{(0)}(s,a)\gets 0\)
\FOR{\(k=1,\ldots,K\)}
    \STATE For each \(i\), set
    \[
    y_i \gets \hat u(s_i,a_i)-g(s_i)+\gamma\sum_{a'}\mu(a'\mid s_i')\,
    \hat Q^{(k-1)}(s_i',a')
    \]
    \STATE Fit \(\hat Q^{(k)} \in \mathcal{F}\) by regressing \(y_i\) on
    \((s_i,a_i)\)
\ENDFOR
\OUTPUT \(\hat Q \gets \hat Q^{(K)}\)
\end{algorithmic}
\end{algorithm}
\end{minipage}

\end{figure}

Algorithm~\ref{alg:simple-irl} instantiates the second stage of GenPQR using
FQE, thereby generalizing DeepPQR \citep{geng2020deep}. It requires only
standard tools for policy estimation and regression. Moreover, the regression
step in Algorithm~\ref{alg:simple-irl} need not be solved exactly at each
iteration: one may instead parametrize \(Q\) by a neural network and take one
or a few stochastic-gradient steps per iteration, as in deep fitted
\(Q\)-learning or boosting \citep{riedmiller2005neural,tosatto2017boosted}. The expectation under
\(\mu\) can also be approximated by Monte Carlo sampling to avoid explicit
integration.

\section{Theoretical Guarantees}
\label{sec:theory}

\subsection{A generic and modular deterministic bound}

We now turn to finite-sample guarantees for $\hat r = \hat Q -\mu \hat Q + g$ obtained via Algorithm \ref{alg:generic-irl}. The following result shows how errors
in the estimated log-policy \(\hat u\) and anchored \(Q\)-function \(\hat Q\)
propagate to the recovered reward. We measure errors in the behavior norm \(L^2(\nu_\pi)\), but the Bellman
operator is most naturally analyzed in \(L^2(d_\mu)\), where \(d_\mu\) is any
stationary state distribution under \((\mu,P)\)
\citep{patterson2022generalized,van2025fitted}, that is,
\[
d_\mu = d_\mu P_\mu,
\qquad
P_\mu(s' \mid s) := \int_{\mathcal A} P(s' \mid s,a)\,\mu(a\mid s)\,da.
\]
Recalling that \(\nu_\pi := \rho \otimes \pi\), we write
\[
\|f\|_{2,\mathrm{beh}}
:=
\Bigl(\E_{(s,a)\sim \nu_\pi}[f(s,a)^2]\Bigr)^{1/2},
\qquad
\|f\|_\infty
:=
\operatorname*{ess\,sup}_{s,a} |f(s,a)|.
\]

We impose a coverage assumption requiring the observed system
\((\rho,\pi,P)\) to provide sufficient support relative to the normalization
policy \(\mu\) under the same dynamics \(P\): \(\pi\) must cover \(\mu\), the
behavior state distribution \(\rho\) must cover the stationary distribution
\(d_\mu\), and \(d_\mu\) must cover the one-step state distribution
\(\nu_\pi P\).
 
\begin{assumption}[Policy mismatch, stationary-state coverage, and one-step state coverage]
\label{cond::coverage}
Assume
\[
C_{\mathrm{cov}}
:=
C_{\nu_\pi P/d_\mu}\,C_{d_\mu/\rho}\,C_{\mu/\pi}
< \infty,
\]
where
\[
C_{\mu/\pi}
:=
\sup_{s,a}\frac{\mu(a\mid s)}{\pi(a\mid s)},
\quad
C_{d_\mu/\rho}
:=
\left\|\frac{d d_\mu}{d\rho}\right\|_{\infty},
\quad
C_{\nu_\pi P/d_\mu}
:=
\left\|\frac{d(\nu_\pi P)}{d d_\mu}\right\|_{\infty}.
\]
\end{assumption}
The coverage constant \(C_{\mathrm{cov}}\) is analogous to standard coverage
and concentrability coefficients in offline RL
\citep{xie2022role,zhan2022offline}. It equals \(1\) under stationary sampling
from \((P,\mu)\). Since the normalization policy \(\mu\) is typically chosen
with knowledge of the behavior policy \(\pi\), this condition is often less
restrictive here than in standard offline policy evaluation. For example, in
the anchor-action case, one would not anchor on an action that is rarely or
never observed.

\begin{theorem}[Reward recovery bound]
\label{thm:reward-recovery}
Under Assumption~\ref{cond::coverage},
\[
\|r^\star-\hat r\|_{2,\mathrm{beh}}
\le
\bigl(1+\sqrt{C_{\mu/\pi}}\bigr)
\left\{
\|Q^\mu_{\hat u-g}-\hat Q\|_{2,\mathrm{beh}}
+
\left(
1+\frac{\gamma\sqrt{C_{\mathrm{cov}}}}{1-\gamma}
\right)
\|u^\star-\hat u\|_{2,\mathrm{beh}}
\right\}.
\]
\end{theorem}

The reward-estimation error decomposes into a \(Q\)-estimation term,
\(\|Q^\mu_{\hat u-g}-\hat Q\|_{2,\mathrm{beh}}\), and a policy-estimation
term, \(\|\hat u-u^\star\|_{2,\mathrm{beh}}\). This makes the bound modular:
any guarantees for \(Q\)-learning and policy estimation translate directly into
a reward-recovery guarantee, with the latter scaled by \((1-\gamma)^{-1}\) and
the coverage coefficients in Assumption~\ref{cond::coverage}. Under sample
splitting \citep{foster2023orthogonal}, for example, existing results apply
directly to \(\hat u\) and \(\hat Q\), including bounds for FQE
\citep{munos2008finite,van2025fitted} and minimax \(Q\)-learning
\citep{uehara2020minimax}. In the next section, we apply this
template to derive finite-sample bounds for PQR with FQE.

\subsection{Generalization conditions for GenPQR with FQE}
\label{sec:theory2}

We now specialize Theorem~\ref{thm:reward-recovery} to the case in which
\(\hat Q\) is obtained by running \(K\) steps of FQE over a regression class
\(\mathcal F\), initialized at \(\hat Q^{(0)}\), as in
Algorithm~\ref{alg:simple-irl}. To state the resulting finite-sample bound, we
assume PAC-style generalization guarantees for the policy-estimation and
regression steps.
\begin{assumption}[Policy generalization]
\label{assump:policy-gen}
There exists \(\underline p>0\) such that, for all \(\delta\in(0,0.5)\), with
probability at least \(1-\delta\),
\[
\min\{\pi(a\mid s),\hat\pi(a\mid s)\}\ge \underline p
\quad\text{for all }(s,a),
\qquad
\left\{
\E_{s\sim\rho}\!\left[
\mathrm{KL}\!\left(\pi(\cdot\mid s)\,\|\,\hat\pi(\cdot\mid s)\right)
\right]
\right\}^{1/2}
\le
\rho_\pi(n,\delta).
\]
\end{assumption}
Lower bounds on \(\hat \pi\) and \(\pi^\star\) ensures that KL divergence controls \(\|\hat u-u^\star\|_{2,\mathrm{beh}}\). For maximum likelihood estimation over a class \(\mathcal U\), one typically has \(\rho_\pi(n,\delta)\lesssim r_{\mathcal U}(n)+\sqrt{\log(1/\delta)/n},\) where \(r_{\mathcal U}(n)\) is a local complexity measure of \(\mathcal U\) (e.g., VC dimension)  \citep{geer2000empirical,wainwright2019high}.

We require each regression step to achieve small excess risk relative to the best approximation in \(\mathcal F\) to the Bellman target \(\mathcal T_{\hat u}^\mu(\hat Q^{(k-1)})\), where \(\mathcal T_u^\mu f:=u-g+\gamma P\mu f\). Define
\[
\mathrm{reg}(Q\mid u,Q')
:=
\|\mathcal T_u^\mu(Q')-Q\|_{2,\mathrm{beh}}^2
-
\inf_{f\in\mathcal F}\|\mathcal T_u^\mu(Q')-f\|_{2,\mathrm{beh}}^2.
\]

\begin{assumption}[One-step regression generalization]
\label{assump:q-gen}
$\sup_{f \in \mathcal{F}} \|f\|_{\infty} < \infty $ and for all \(k\in[K]\) and \(\delta\in(0,0.5)\), with probability at least
\(1-\delta\), $\{\mathrm{reg}(\hat Q^{(k)}\mid \hat u,\hat Q^{(k-1)})\}^{1/2}
\le
\rho_{Q}(n,\delta).$
\end{assumption}
Such bounds for nuisance-dependent regression targets follow from
\citet{foster2023orthogonal,van2026researcher}. Under a fresh-sample analysis
\citep{munos2008finite}, one typically has
\(\rho_{Q}(n,\delta)\lesssim r_{\mathcal F}(n/K)+\sqrt{\log(1/\delta)/n}\),
where \(r_{\mathcal F}(n/K)\) is the local Rademacher critical radius of
\(\mathcal F\); see Appendix~\ref{appendix::LSE}. Similar bounds hold for
weakly dependent trajectory data under mixing conditions
\citep{yu1994rates,mohri2010stability}.

\subsection{Finite-sample bound for GenPQR with FQE}
\label{sec:theory3}

We now state our main result for GenPQR with FQE under approximate Bellman-completeness. To account for misspecification in the regression steps, we introduce the
\emph{inherent Bellman error} of the regression class \citep{munos2008finite}:
\[
\varepsilon_{\mathcal F}
:=
\sup_{f \in \mathcal F}\inf_{g \in \mathcal F}
\|\mathcal T_{u^\star}^\mu f-g\|_{\infty}.
\]
This quantity is zero when \(\mathcal F\) is Bellman complete, that is, when
\(\mathcal T_{u^\star}^\mu f \in \mathcal F\) for every \(f \in \mathcal F\)
\citep{chen2019information}. In that case, each regression step in
Algorithm~\ref{alg:simple-irl} is correctly specified, since its population
target lies in \(\mathcal F\). We define the standard concentrability coefficient
\[
C_{\mathrm{conc}}
:=
\sup_{m\ge 0}
\left\|
\frac{d\bigl(\nu_\pi (P\mu)^m\bigr)}{d\nu_\pi}
\right\|_\infty < \infty,
\]
which is finite under  Assumption~\ref{cond::coverage} with $C_{\mathrm{conc}}
\le C_{\mathrm{cov}}$ by Lemma~\ref{lem:coverage-implies-conc} in Appendix \ref{appendix:lemmas}.

\begin{theorem}[Finite-sample bound for GenPQR with FQE]
\label{thm:pqr-fqe}
Assume Assumptions~\ref{cond::coverage}, \ref{assump:policy-gen}, and
\ref{assump:q-gen}.
Then with probability at least \(1-\delta\),
\begin{align*}
\|r^\star-\hat r\|_{2,\mathrm{beh}}
&\le
\bigl(1+\sqrt{C_{\mu/\pi}}\bigr)
\Bigg\{
\sqrt{C_{\mathrm{conc}}}\,
\gamma^K \|\hat Q^{(0)}-Q^\mu_{\hat u-g}\|_{\infty}
+
\frac{\sqrt{C_{\mathrm{conc}}}}{1-\gamma}
\Bigl(
\varepsilon_{\mathcal F}
+\rho_Q\bigl(n,\tfrac{\delta}{2K}\bigr)
\Bigr)
\\
&\qquad\qquad+
\frac{\sqrt{2}}{\underline p^{2}}\left(
1
+\frac{\sqrt{C_{\mathrm{conc}}}}{1-\gamma}
+\frac{\gamma\sqrt{C_{\mathrm{cov}}}}{1-\gamma}
\right)
\rho_\pi\bigl(n,\tfrac{\delta}{2}\bigr)
\Bigg\}.
\end{align*}
\end{theorem}
\textbf{Proof sketch.}
Apply Theorem~\ref{thm:reward-recovery} and bound
\(\|Q^\mu_{\hat u-g}-\hat Q\|_{2,\mathrm{beh}}\) by the inexact Picard argument
of \citet{munos2008finite}. The only change is that inherent Bellman error is
measured for \(\mathcal T^\mu_{u^\star}\), not the data-dependent
\(\mathcal T^\mu_{\hat u}\); Lemma~\ref{lemma::periter} shows this suffices up
to policy-estimation error. Approximate Bellman completeness could be
relaxed using suitable weighting \citep{van2025fitted} or minimax formulations
\citep{uehara2023offline}.

\textbf{Discussion.}
The bound matches the usual FQE structure
\citep{munos2008finite,van2025fitted}, with additional terms due to estimation
of \(u^\star\). The three terms respectively capture finite-iteration error,
approximation and statistical error from the fitted Bellman updates, and the
IRL-specific error from estimating \(u^\star\). The last term can be large when
the behavior policy is nearly deterministic, for example in low-temperature
softmax regimes, because KL divergence controls \(L^2\) log-policy error only
up to a factor of \(\underline p^{-2}\). In particular, if \(\hat u\) and
\(\hat Q\) are learned by ERM over parametric classes \(\mathcal U\) and
\(\mathcal F\) with pseudo-dimensions \(d_{\mathcal U}\) and
\(d_{\mathcal F}\), and \(K \asymp \log n\), then one typically obtains, up to
\(O(\sqrt{\log \log n / n})\) terms,
\[
\|\hat r-r^\star\|_{2,\mathrm{beh}}
\lesssim
\frac{\sqrt{C_{\mu/\pi}C_{\mathrm{cov}}}}{1-\gamma}
\left(
\varepsilon_{\mathcal F}
+
\sqrt{\frac{d_{\mathcal F}}{n}}
+
\frac{1}{\underline p^{2}}\sqrt{\frac{d_{\mathcal U}}{n}}
\right).
\]

\textbf{Comparison to DeepPQR.}
Theorem~2 of \citet{geng2020deep} gives a related FQE bound for anchor-action
reward recovery under a specific neural-network architecture. Our result is
more modular: it is not tied to a particular function class, allows approximate
Bellman completeness for \(\mathcal T^\mu_{u^\star}\), makes coverage explicit,
and replaces sup-norm policy-error control with Kullback--Leibler or
\(L^2\)-type control, which is natural for likelihood-based policy estimators,
multiclass classification, MaxEnt IRL \citep{ziebart2008maximum}, and
adversarial IRL \citep{fu2018airl,snoswell2020revisiting,ke2020imitation,
foster2024behavior}.

\section{Experimental investigation}
\label{sec:experiments}

We adapt DeepPQR's synthetic study to isolate identification rather than
imitation performance, using its infinite-horizon environment with continuous
states, five actions, deterministic transitions, and anchor-action
normalization \(g(s)=0\). We compare DeepPQR and GenPQR under matched policy
estimation, and study how policy and \(Q\)-estimation choices affect modular
reward recovery. The key distinction is the effective anchor sample: DeepPQR
estimates its anchor \(Q\)-function only on anchor-action transitions, whereas
GenPQR uses the full sample for \(Q\)-evaluation. We report reward MSE,
reward correlation, held-out policy negative log-likelihood, and runtime over
\(100\) seeds with \(95\%\) confidence intervals; details are in
Appendix~\ref{app:simulation-details}.

\begin{figure*}[htb]
\centering
\begin{minipage}[c]{0.71\textwidth}
\centering
\scriptsize
\resizebox{\linewidth}{!}{%
\begin{tabular}{llcccc}
\toprule
Setting & Method & Anchor ct. (frac.) & MSE $\downarrow$ & Corr. $\uparrow$ & Time $\downarrow$ \\
\midrule
200 / rare & DeepPQR & 311 (0.16) & 2.60 $\pm$ 0.15 & 0.46 $\pm$ 0.03 & 7.33 $\pm$ 0.10 \\
200 / rare & GenPQR & 311 (0.16) & \textbf{0.91 $\pm$ 0.11} & \textbf{0.66 $\pm$ 0.03} & \textbf{5.22 $\pm$ 0.08} \\
\midrule
1000 / rare & DeepPQR & 1559 (0.16) & 1.48 $\pm$ 0.10 & 0.61 $\pm$ 0.03 & 38.84 $\pm$ 0.60 \\
1000 / rare & GenPQR & 1559 (0.16) & \textbf{0.76 $\pm$ 0.08} & \textbf{0.73 $\pm$ 0.02} & \textbf{27.76 $\pm$ 0.42} \\
\midrule
2500 / common & DeepPQR & 6280 (0.25) & 0.70 $\pm$ 0.07 & 0.73 $\pm$ 0.02 & 92.82 $\pm$ 1.28 \\
2500 / common & GenPQR & 6280 (0.25) & \textbf{0.60 $\pm$ 0.06} & \textbf{0.73 $\pm$ 0.02} & \textbf{64.09 $\pm$ 0.90} \\
\bottomrule
\end{tabular}%
}
\end{minipage}
\hfill
\begin{minipage}[c]{0.25\textwidth}
\centering
\includegraphics[width=\linewidth]{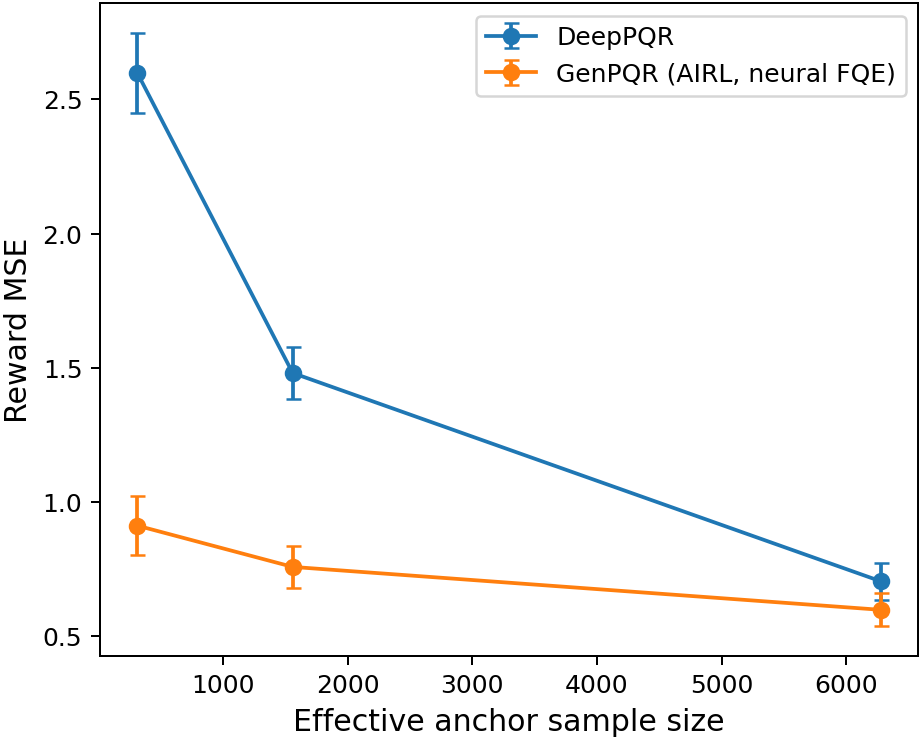}
\end{minipage}
\caption{Matched DeepPQR vs.\ GenPQR comparison. Both use the same AIRL policy
estimate and neural FQE. Entries are mean \(\pm\) 95\% CI over 100 seeds.
Anchor count is the number of anchor-action transitions used by DeepPQR's
anchor-\(Q\) step.}
\label{fig:exp1_combo}
\end{figure*}

\textbf{Matched comparison to DeepPQR.}
Both methods use the same AIRL policy estimate and neural downstream
approximation, so differences reflect the identification step
(Section~\ref{sec:solve-normalization}). We vary trajectory count and anchor
frequency, which determine DeepPQR's effective anchor sample size.
Figure~\ref{fig:exp1_combo} summarizes three regimes; Appendix
\ref{app:additional-experiments} reports similar behavior with more actions.

\vspace{0.35em}
\noindent\textbf{Estimator choices.}
At \(1000\) trajectories with a well-supported anchor action, we compare AIRL
versus behavior cloning for policy estimation, neural versus boosted FQE, and
standard reward-recovery baselines. Figure~\ref{fig:exp2_combo} in
Appendix~\ref{app:fighigher} shows that GenPQR remains effective across
estimator choices, with substantial runtime variation across implementations.

\vspace{0.35em}
\noindent\textbf{Discussion.}
The matched comparison isolates the statistical cost of anchor-subset
identification; the estimator-choice study shows that GenPQR remains effective
under practical policy and \(Q\)-estimation choices.  We provide concluding remarks in Appendix \ref{app:additional-conclusion}.

\bibliographystyle{plainnat}
\bibliography{references}

\appendix

\section{Additional Concluding Remarks}
\label{app:additional-conclusion}

This paper separates two issues that are often entangled in inverse
reinforcement learning: behavior matching and reward identification. In the
MaxEnt/Gumbel-shock model, matching the observed policy identifies only an
equivalence class of reward--value pairs. GenPQR makes the identifying
normalization explicit and shows that, once a normalization is chosen, reward
recovery reduces to two standard statistical tasks: estimating the behavior
policy and solving a Bellman evaluation problem. This reduction preserves the
modularity of policy-to-\(Q\)-to-reward methods while extending them beyond
fixed anchor actions to statewise affine normalizations, including
state-dependent anchors, mean-reward constraints, and value-based
normalizations.

The main practical message is that normalized IRL need not require a specialized
joint reward-learning objective. Existing classifiers, imitation-learning
methods, fitted \(Q\)-evaluation procedures, and minimax or critic-based
offline-RL tools can be used as interchangeable components. This modularity is
useful both statistically and computationally: policy-estimation error and
\(Q\)-estimation error enter separately in the theory, and practitioners can
choose first- and second-stage estimators suited to their data, action space,
and coverage conditions. In the anchor-action setting, this also clarifies the
role of DeepPQR: it is a special case of the same identification principle, but
one whose effective sample size can be limited by the frequency of the anchor
action.

The limitations are explicit. GenPQR recovers the reward corresponding to the
chosen normalization; different normalizations select different representatives
from the same behaviorally equivalent class. Thus the normalization is a
substantive modeling choice, not a technical detail. The method also inherits
the usual offline-RL requirements: the observed data must cover the actions and
states needed to evaluate the normalization policy, and the recovered reward can
only be as accurate as the estimated policy and \(Q\)-function. These
limitations are not specific to GenPQR; they reflect the partial-identification
and coverage barriers inherent to reward recovery from behavior alone.

Overall, GenPQR gives an identification-first view of MaxEnt IRL: estimate the
policy, evaluate one induced Bellman equation, and normalize the resulting
reward. This turns reward recovery into a transparent post-processing problem,
makes the identifying assumptions explicit, and allows normalized IRL to use
the full toolbox of modern classification and offline policy-evaluation methods.

 \section{Experiment figure: high sample comparison}
 \label{app:fighigher}
\begin{figure}[htb]
\centering
\scriptsize
\setlength{\tabcolsep}{3pt}
\begin{tabular}{lcccc}
\toprule
Method & MSE $\downarrow$ & Corr. $\uparrow$ & NLL $\downarrow$ & Time $\downarrow$ \\
\midrule
GenPQR (BC, NN) & \textbf{0.37 $\pm$ 0.06} & \textbf{0.81 $\pm$ 0.01} & \textbf{1.546 $\pm$ 0.004} & 4.74 $\pm$ 0.06 \\
GenPQR (AIRL, NN) & 0.63 $\pm$ 0.06 & 0.72 $\pm$ 0.03 & 1.608 $\pm$ 0.006 & 26.43 $\pm$ 0.34 \\
GenPQR (BC, GBT) & 1.07 $\pm$ 0.19 & 0.49 $\pm$ 0.03 & \textbf{1.546 $\pm$ 0.004} & \textbf{2.25 $\pm$ 0.03} \\
GenPQR (AIRL, GBT) & 1.07 $\pm$ 0.19 & 0.41 $\pm$ 0.05 & 1.608 $\pm$ 0.006 & 23.97 $\pm$ 0.32 \\
\midrule
DeepPQR & 0.79 $\pm$ 0.07 & 0.69 $\pm$ 0.02 & 1.608 $\pm$ 0.006 & 38.24 $\pm$ 0.42 \\
MaxEnt-IRL & 1.17 $\pm$ 0.18 & 0.35 $\pm$ 0.03 & 1.624 $\pm$ 0.005 & 8.57 $\pm$ 0.11 \\
AIRL state reward & 2.17 $\pm$ 0.21 & -0.01 $\pm$ 0.03 & 1.608 $\pm$ 0.006 & 23.83 $\pm$ 0.32 \\
Log-policy pseudo & 3.40 $\pm$ 0.18 & 0.37 $\pm$ 0.05 & 1.608 $\pm$ 0.006 & 23.83 $\pm$ 0.32 \\
\bottomrule
\end{tabular}
\caption{Higher-sample method comparison. Entries are mean $\pm$ 95\% confidence interval over 100 seeds. NN = neural FQE; GBT = boosted FQE.}
\label{fig:exp2_combo}
\end{figure}

\section{Simulation Details}
\label{app:simulation-details}

This appendix summarizes the simulation design and implementation choices used
in Section~\ref{sec:experiments}. The full experiment scripts, configuration
choices, and plotting code are included in the accompanying repository.

\textbf{Environment.}
We adapt the synthetic environment of \citet{geng2020deep}. States are
continuous with dimension \(p=5\), the action set has \(|\mathcal A|=5\)
actions, and the discount factor is \(\gamma=0.9\). We use an infinite-horizon
stationary data-generating process and generate finite trajectories of length
\(10\) for offline training and evaluation. The normalization is the anchor
constraint \(r(s,a^\dagger)=0\), implemented as anchor-action normalization
with \(a^\dagger=0\) and \(g(s)=0\). Train and test sets within each seed are
generated from the same underlying MDP parameters, with independent rollouts.

\textbf{Behavior policy and transitions.}
The logged policy is generated from a soft \(Q\)-planner with \(\alpha=1\), so
that the DeepPQR log-policy identity is correctly matched. The planner induces
a heterogeneous but non-degenerate behavior policy, and we vary anchor support
by shifting the anchor-action logit. Transitions are deterministic conditional
on state and action up to boundary handling: actions induce action-specific
state shifts, and trajectories that leave the bounded state region are reset
uniformly within that region. We clip estimated action probabilities to
\([0.01,0.99]\) and renormalize before using them in either GenPQR or DeepPQR.

\paragraph{Experiment 1.}
The matched DeepPQR-vs.-GenPQR comparison uses shared AIRL policy estimation
and neural downstream approximation. We consider three settings:
\texttt{(200, -1.0)}, \texttt{(1000, -1.0)}, and \texttt{(2500, 0.0)}, where
the first entry is the number of training trajectories and the second is the
anchor-logit shift. These correspond to low-sample rare-anchor, mid-sample
rare-anchor, and high-sample common-anchor regimes. Each setting uses \(300\)
test trajectories and \(100\) random seeds.

\paragraph{Experiment 2.}
The higher-sample comparison fixes \(1000\) training trajectories, \(300\) test
trajectories, and anchor-logit shift \(0.0\), again over \(100\) seeds. We
compare DeepPQR, GenPQR with two policy estimators and two \(Q\)-estimators,
and several reward-recovery baselines that are standard in the IRL literature.

\paragraph{Policy estimators.}
AIRL uses a standard action-independent reward network and potential network,
both implemented as two-layer ReLU MLPs with hidden widths \((64,64)\), Adam
step size \(10^{-3}\), \(60\) behavior-cloning warm-start epochs, and \(80\)
adversarial updates in the paper experiments. Behavior cloning uses the same
\((64,64)\) MLP architecture and \(40\) epochs in Experiment~2. MaxEnt-IRL is
implemented as a neural softmax-\(Q\) model with hidden widths \((128,128)\)
and \(150\) gradient steps. These architectures were chosen to be standard for
low-dimensional synthetic control problems and to remain stable across seeds;
larger networks did not materially improve performance in pilot runs.

\paragraph{\(Q\)-evaluation.}
Neural FQE uses a dueling-style MLP \(Q(s,a)=V(s)+A(s,a)-\bar A(s)\) with
hidden widths \((128,128)\), Adam step size \(5\times 10^{-3}\), \(8\) Bellman
iterations, and \(4\) epochs of regression per iteration. Boosted FQE uses
LightGBM directly, with \(4\) outer Bellman iterations and \(30\) boosting
rounds per iteration, learning rate \(0.05\), \(32\) leaves, and minimum leaf
size \(20\). We selected these settings to balance Bellman-fit accuracy,
runtime, and seed-to-seed stability; the boosted configuration follows the
repository's adapted FQE implementation and avoids long inner re-fitting loops.

\paragraph{DeepPQR and baselines.}
DeepPQR uses the same shared AIRL policy estimate as GenPQR in the matched
comparison. It then estimates the anchor \(Q\)-function on the anchor-action
subset, reconstructs the full \(Q\)-function from log-policy ratios, and fits
the final reward-regression network. The AIRL state-reward baseline uses the
AIRL reward head directly; the log-policy pseudo-reward baseline uses
\(\log \hat\pi(a\mid s)-g(s)\); MaxEnt-IRL uses its learned \(Q\)-surrogate as
reward; and the linear reward baseline fits a strictly linear state-action
reward model with action-specific coefficients and ridge regularization.

\paragraph{Reporting.}
For each method we report reward mean squared error, reward correlation with
the ground-truth normalized reward, held-out policy negative log-likelihood
when applicable, and wall-clock runtime. Confidence intervals are normal
approximation intervals based on the \(100\) seed-level estimates.

\section{Additional Experimental Results}

\label{app:additional-experiments}

This appendix reports a supplemental study that
studies the many-action regime, where the number of actions grows while the
overall sample size is held fixed, so the effective anchor sample size for
DeepPQR decreases mechanically.

\paragraph{Method comparison.} Figure~\ref{fig:exp2_methods} corresponds to the second experiment in the main text.

\begin{figure}[htb]
\centering
\includegraphics[width=0.95\linewidth]{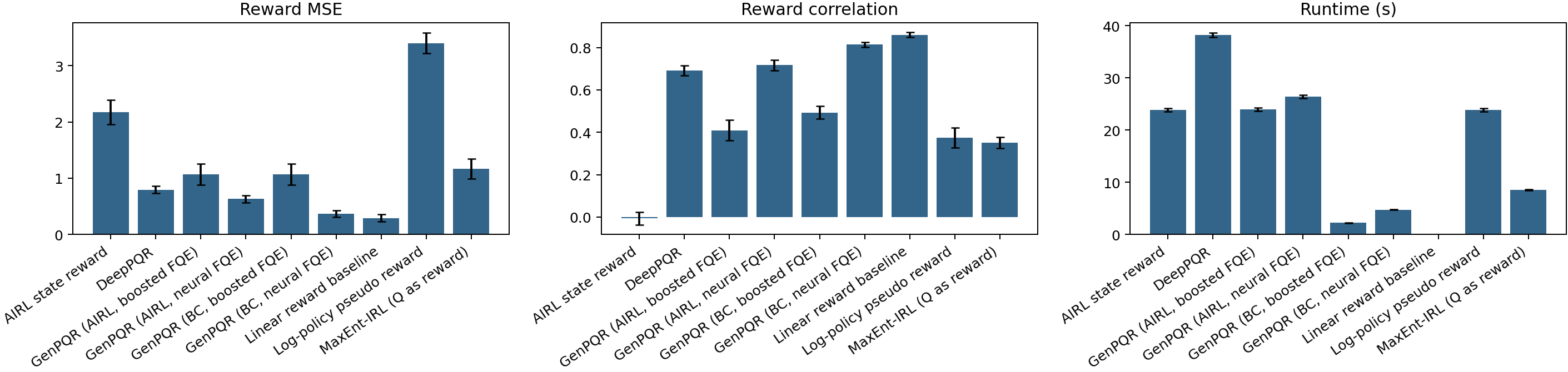}
\caption{Higher-sample method comparison. Neural GenPQR variants outperform non-identifying baselines, and behavior cloning with neural FQE provides the best accuracy-runtime tradeoff in this setting. Error bars denote 95\% confidence intervals over 100 seeds.}
\label{fig:exp2_methods}
\end{figure}

\paragraph{Many-action regime.}
Figure~\ref{fig:app-many-action} fixes the overall sample size at the
high-sample matched setting and increases the number of actions. As
\(|\mathcal A|\) grows, the effective anchor sample size falls from roughly
\(6400\) anchor-action transitions at \(|\mathcal A|=5\) to roughly \(800\) at
\(|\mathcal A|=40\). In this regime, DeepPQR degrades sharply, whereas GenPQR
remains substantially more stable because its \(Q\)-evaluation step continues
to use the full sample. We show both AIRL- and BC-based versions of each
method. Because this fixed-sample action-scaling study was run as a quick pilot
to validate the trend, the figure should be interpreted as directional.

\begin{figure*}[htb]
\centering
\begin{minipage}[c]{0.48\textwidth}
\centering
\includegraphics[width=\linewidth]{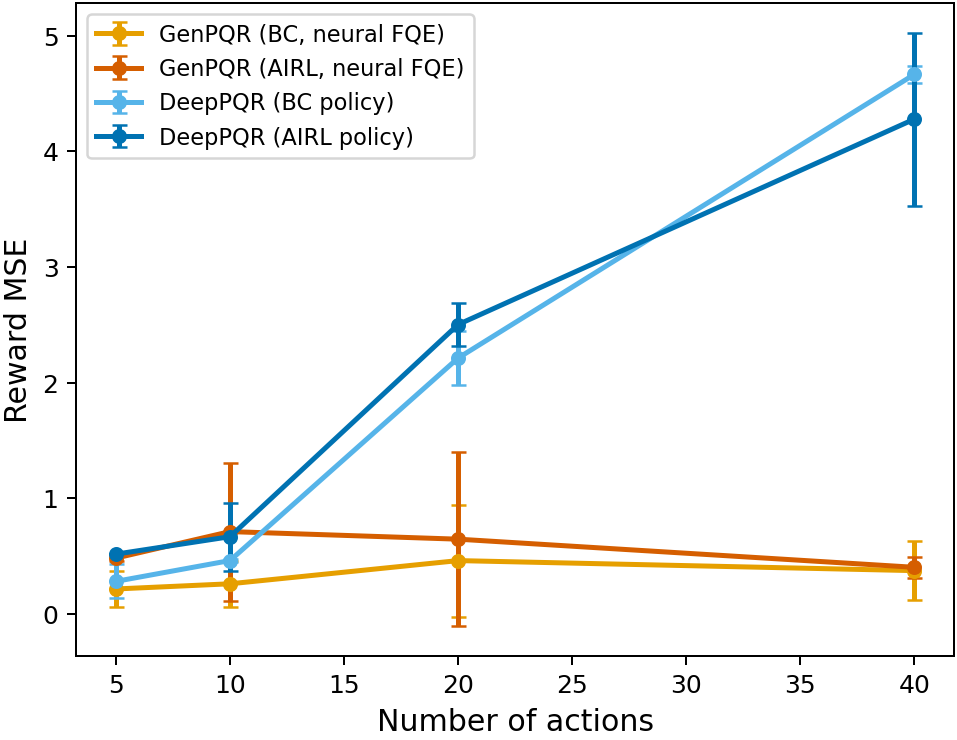}
\end{minipage}
\hfill
\begin{minipage}[c]{0.48\textwidth}
\centering
\includegraphics[width=\linewidth]{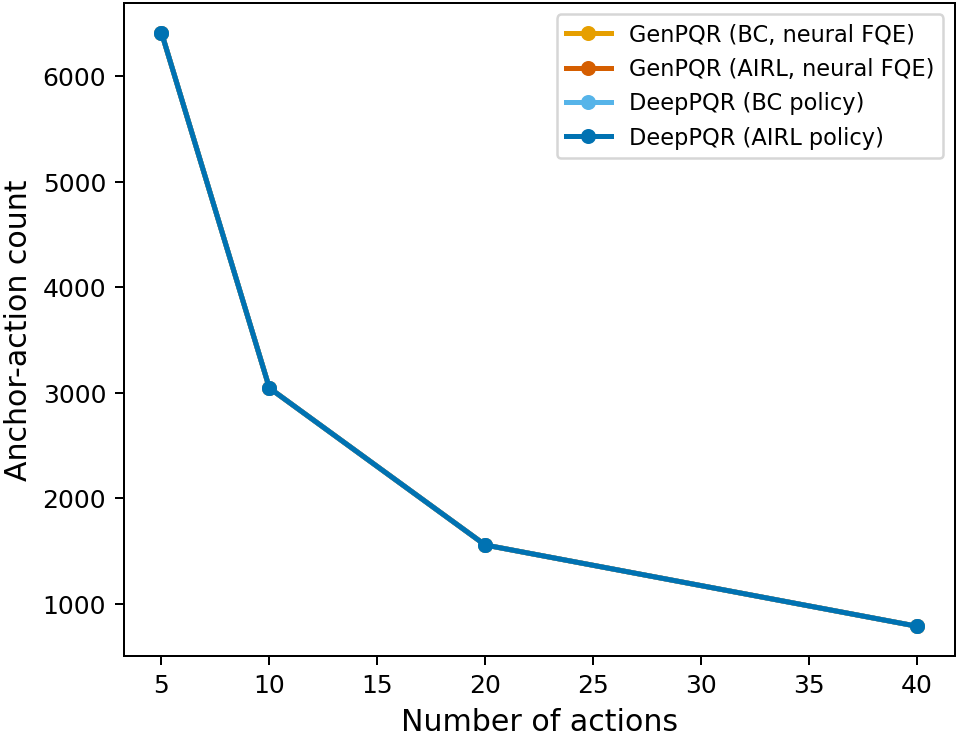}
\end{minipage}
\caption{Fixed-sample many-action study. Left: reward MSE as the number of actions increases. Right: realized anchor-action count over the same sweep. The total sample size is held fixed, so DeepPQR's effective sample size shrinks with \(|\mathcal A|\) while GenPQR continues to use the full sample in its \(Q\)-evaluation step.}
\label{fig:app-many-action}
\end{figure*}

\section{Identification and Representation}
\label{app:identification}

\subsection{Proof of Lemma~\ref{lem:trivial}}

\begin{proof}
Let \(u^\star(s,a):=\log \pi(a\mid s)\). Since
\[
\sum_{a'}\exp\{u^\star(s,a')\}
=
\sum_{a'} \pi(a'\mid s)
=
1,
\]
we have \(\Xi(u^\star)(s)=0\) for every \(s\). Hence
\[
P\Xi(u^\star)(s,a)=0,
\]
so \((u^\star,0)\) is feasible for \eqref{eq:relaxed-irl}.

To prove optimality, define \(q(s,a):=r(s,a)+\gamma v(s,a)\). For any feasible
\((r,v)\), the objective in \eqref{eq:relaxed-irl} can be written as
\[
\E_{(s,a)\sim \nu_\pi}\!\big[q(s,a)-\Xi q(s)\big].
\]
For each state \(s\), the quantity \(q(s,a)-\Xi q(s)\) is the log-probability
of action \(a\) under the softmax policy induced by \(q\),
\[
\pi_q(a\mid s)
:=
\exp\{q(s,a)-\Xi q(s)\}.
\]
Therefore
\begin{align*}
\E_{(s,a)\sim \nu_\pi}\!\big[q(s,a)-\Xi q(s)\big]
&=
\E_{s\sim \rho}\!\left[
\sum_a \pi(a\mid s)\log \pi_q(a\mid s)
\right] \\
&=
-\E_{s\sim \rho}\!\Bigl[
\KL\!\bigl(\pi(\cdot\mid s)\,\|\,\pi_q(\cdot\mid s)\bigr)
\Bigr]
+ \E_{s\sim \rho}\!\bigl[H(\pi(\cdot\mid s))\bigr].
\end{align*}
This is maximized when \(\pi_q(\cdot\mid s)=\pi(\cdot\mid s)\) for every
\(s\), which is attained by \(q=u^\star\), i.e., by \((r,v)=(u^\star,0)\).
\end{proof}

\subsection{Proof of Lemma~\ref{lem:shaping}}

\begin{proof}
Let \(q:=r+\gamma v\), and define
\[
\tilde r=r+c-\gamma Pc,
\qquad
\tilde v=v+Pc.
\]
Then
\[
\tilde r+\gamma \tilde v
=
r+\gamma v + c
=
q+c,
\]
where \(c\) is understood as a state-only function added to every action at
the same state. Consequently,
\[
\Xi(\tilde r+\gamma\tilde v)(s)
=
\Xi(q+c)(s)
=
c(s)+\Xi q(s).
\]
Applying \(P\) to both sides gives
\[
P\Xi(\tilde r+\gamma\tilde v)
=
Pc + P\Xi q
=
Pc + v
=
\tilde v,
\]
so \((\tilde r,\tilde v)\) is feasible for \eqref{eq:relaxed-irl}.

The objective value is unchanged because
\[
\tilde r(s,a)+\gamma \tilde v(s,a)-\Xi(\tilde r+\gamma\tilde v)(s)
=
q(s,a)+c(s)-\bigl(\Xi q(s)+c(s)\bigr)
=
q(s,a)-\Xi q(s).
\]
Thus \((\tilde r,\tilde v)\) attains the same objective value as \((r,v)\).
The induced log-policy,
\[
\tilde r+\gamma \tilde v-\Xi(\tilde r+\gamma\tilde v)
=
q-\Xi q,
\]
is therefore also unchanged.
\end{proof}

\subsection{Proof of Theorem~\ref{thm:unique}}

\begin{proof}
Let \(Q^\mu_{u^\star-g}\) denote the unique bounded fixed point of
\[
Q = u^\star-g+\gamma P\mu Q.
\]
Define
\[
r^\star
:=
Q^\mu_{u^\star-g}-\mu Q^\mu_{u^\star-g}+g,
\qquad
v^\star
:=
\frac{1}{\gamma}\bigl(u^\star-g-Q^\mu_{u^\star-g}\bigr).
\]

We first verify feasibility for \eqref{eq:main-irl}. The normalization is
immediate:
\[
\mu r^\star
=
\mu Q^\mu_{u^\star-g}-\mu Q^\mu_{u^\star-g}+g
=
g.
\]
Next, the Bellman equation for \(Q^\mu_{u^\star-g}\) implies
\[
v^\star
=
\frac{1}{\gamma}\bigl(u^\star-g-Q^\mu_{u^\star-g}\bigr)
=
-P\mu Q^\mu_{u^\star-g}.
\]
Hence
\[
r^\star+\gamma v^\star
=
Q^\mu_{u^\star-g}-\mu Q^\mu_{u^\star-g}+g
+
u^\star-g-Q^\mu_{u^\star-g}
=
u^\star-\mu Q^\mu_{u^\star-g}.
\]
Since \(\mu Q^\mu_{u^\star-g}\) is state-only and \(\Xi(u^\star)=0\),
\[
\Xi(r^\star+\gamma v^\star)
=
\Xi\!\bigl(u^\star-\mu Q^\mu_{u^\star-g}\bigr)
=
-\mu Q^\mu_{u^\star-g}.
\]
Applying \(P\) gives
\[
P\Xi(r^\star+\gamma v^\star)
=
-P\mu Q^\mu_{u^\star-g}
=
v^\star.
\]
Thus \((r^\star,v^\star)\) is feasible.

To prove optimality, observe that
\[
r^\star+\gamma v^\star
=
u^\star+c^\star,
\qquad
c^\star(s):= -(\mu Q^\mu_{u^\star-g})(s).
\]
By the same log-likelihood argument as in Lemma~\ref{lem:trivial}, adding a
state-only shift \(c^\star\) does not change the induced policy, so
\((r^\star,v^\star)\) attains the same objective value as \((u^\star,0)\).
Hence it is optimal.

It remains to show uniqueness. Let \((r,v)\) be any optimal feasible pair for
\eqref{eq:main-irl}, and set \(q:=r+\gamma v\). Since \((r,v)\) is also
feasible for the relaxed problem and achieves the same maximal likelihood as
\((u^\star,0)\), the induced policy must equal \(\pi\). Therefore
\[
u^\star(s,a)
=
q(s,a)-\Xi q(s),
\]
which implies
\[
q(s,a)=u^\star(s,a)+c(s),
\qquad
c(s):=\Xi q(s).
\]
Feasibility then gives
\[
v
=
P\Xi q
=
Pc,
\qquad
r
=
q-\gamma v
=
u^\star+c-\gamma Pc.
\]
The normalization condition \(\mu r=g\) becomes
\[
g
=
\mu u^\star + c - \gamma P_\mu c,
\qquad
P_\mu c(s):=\int Pc(s,a)\,\mu(da\mid s).
\]
Equivalently,
\[
(I-\gamma P_\mu)c
=
g-\mu u^\star.
\]
Since \(P_\mu\) is a Markov operator with \(\|P_\mu\|_\infty\le 1\) and
\(\gamma<1\), the operator \(I-\gamma P_\mu\) is invertible on bounded
functions via the Neumann series. Thus \(c\) is unique, and so are \(v=Pc\)
and \(r=u^\star+c-\gamma Pc\).
\end{proof}

\subsection{An Equivalent Identification}

The next result rewrites the identification in terms of the continuation value
\(v^\star\). It is equivalent to Theorem~\ref{thm:unique}, but it enables direct
modeling of \(v^\star\), from which the reward \(r^\star\) is obtained
immediately.

\begin{theorem}[An equivalent identification with anchor function]
\label{thm:unique2}
\Cref{eq:main-irl} admits a unique optimal solution \((r^\star,v^\star)\), where
\[
r^\star
=
\ustar + \mu\!\left(g+\gamma v^\star-\ustar\right)-\gamma v^\star,
\]
and \(v^\star\) is the unique bounded solution to the fixed-point equation
\[
v^\star = P\mu\!\left(g+\gamma v^\star-\ustar\right).
\]
\end{theorem}

\begin{proof}
Let \(Q^\mu_{\ustar-g}\) denote the unique bounded solution to
\[
Q^\mu_{\ustar-g}
=
\ustar-g+\gamma P\mu Q^\mu_{\ustar-g}.
\]
By Theorem~\ref{thm:unique},
\[
r^\star
=
Q^\mu_{\ustar-g}-\mu Q^\mu_{\ustar-g}+g,
\qquad
v^\star
=
\frac{1}{\gamma}\bigl(\ustar-g-Q^\mu_{\ustar-g}\bigr).
\]
Equivalently,
\[
Q^\mu_{\ustar-g}=\ustar-g-\gamma v^\star.
\]
Substituting this identity into the Bellman equation for \(Q^\mu_{\ustar-g}\)
gives
\[
\ustar-g-\gamma v^\star
=
\ustar-g+\gamma P\mu(\ustar-g-\gamma v^\star),
\]
hence
\[
v^\star
=
-P\mu(\ustar-g-\gamma v^\star)
=
P\mu(g+\gamma v^\star-\ustar),
\]
which is the claimed fixed-point equation.

For the reward,
\begin{align*}
r^\star
&=
Q^\mu_{\ustar-g}-\mu Q^\mu_{\ustar-g}+g \\
&=
(\ustar-g-\gamma v^\star)-\mu(\ustar-g-\gamma v^\star)+g \\
&=
\ustar + \mu(g+\gamma v^\star-\ustar)-\gamma v^\star.
\end{align*}
Uniqueness follows directly from Theorem~\ref{thm:unique}.
\end{proof}

\subsection{Proof of Theorem~\ref{thm:identification}}

\begin{proof}
Let \((r,v)\) solve \eqref{eq:relaxed-irl}, and define
\[
q:=r+\gamma v,
\qquad
c(s):=\Xi q(s).
\]
Since \((r,v)\) is optimal for the relaxed problem, it induces the behavior
policy \(\pi\). Therefore
\[
u^\star(s,a)
=
q(s,a)-\Xi q(s)
=
q(s,a)-c(s),
\]
so
\[
q(s,a)=u^\star(s,a)+c(s).
\]
The Bellman feasibility condition then gives
\[
v=P\Xi q=Pc,
\qquad
r=q-\gamma v=u^\star+c-\gamma Pc.
\]

Fix any policy \(\pi_1\), and let \(Q^\pi_{u^\star}\) denote the unique bounded
solution to
\[
Q^\pi_{u^\star}
=
u^\star+\gamma \pi_1 P Q^\pi_{u^\star}.
\]
Because \(c\) is state-only,
\[
\pi_1 P c = Pc.
\]
Hence
\begin{align*}
r+\gamma \pi_1 P\bigl(Q^\pi_{u^\star}+c\bigr)
&=
u^\star+c-\gamma Pc+\gamma \pi_1 P Q^\pi_{u^\star}+\gamma \pi_1 P c \\
&=
u^\star+\gamma \pi_1 P Q^\pi_{u^\star}+c \\
&=
Q^\pi_{u^\star}+c.
\end{align*}
So \(Q^\pi_{u^\star}+c\) solves the Bellman equation for reward \(r\) under
policy \(\pi_1\). By uniqueness of bounded Bellman fixed points,
\[
Q_r^{\pi_1}=Q_{u^\star}^{\pi_1}+c,
\]
which is equivalent to
\[
Q_{u^\star}^{\pi_1}=Q_r^{\pi_1}-c.
\]

Now let \(c^\star\) denote the state shift corresponding to the normalized
solution \(r^\star\). By Theorem~\ref{thm:unique}, \(r^\star\) is also a shaped
version of \(u^\star\), so the same argument gives
\[
Q_{r^\star}^{\pi_i}=Q_{u^\star}^{\pi_i}+c^\star,
\qquad i\in\{1,2\}.
\]
Taking policy values and using that \(c\) and \(c^\star\) are state-only,
\[
V_r^{\pi_i}
=
V_{u^\star}^{\pi_i}+c,
\qquad
V_{r^\star}^{\pi_i}
=
V_{u^\star}^{\pi_i}+c^\star.
\]
Subtracting the identities for \(i=1\) and \(i=2\) proves
\[
V_{r^\star}^{\pi_1}(s)-V_{r^\star}^{\pi_2}(s)
=
V_r^{\pi_1}(s)-V_r^{\pi_2}(s).
\qedhere
\]
\end{proof}

\section{Reward Recovery and Finite-Sample Analysis}
\label{app:recovery}

\subsection{Proof of Theorem~\ref{thm:reward-recovery}}

\begin{proof}
For any measure \(d\) on \(\mathcal S\), let
\(
\|h\|_{2,d} := \{\E_{s\sim d}[h(s)^2]\}^{1/2}.
\)
Since
\(
r^\star(s,a)=Q^\mu_{u^\star-g}(s,a)-(\mu Q^\mu_{u^\star-g})(s)+g(s),
\)
we may write
\(
r^\star-\hat r=\widetilde\Delta_Q-\mu\widetilde\Delta_Q,
\)
where
\(
\widetilde\Delta_Q:=Q^\mu_{u^\star-g}-\hat Q.
\)
Therefore
\begin{equation}
\label{eq:reward-split}
\|r^\star-\hat r\|_{2,\mathrm{beh}}
\le
\|\widetilde\Delta_Q\|_{2,\mathrm{beh}}
+
\|\mu\widetilde\Delta_Q\|_{2,\mathrm{beh}}.
\end{equation}

For any measurable \(f:\mathcal S\times\mathcal A\to\mathbb R\), Jensen's inequality gives
\[
|\mu f(s)|^2
=
\Bigl|\sum_a \mu(a\mid s)f(s,a)\Bigr|^2
\le
\sum_a \mu(a\mid s)f(s,a)^2.
\]
Hence
\[
\|\mu f\|_{2,\mathrm{beh}}^2
\le
\int \sum_a \mu(a\mid s)f(s,a)^2\,d\rho(s)
=
\int \frac{\mu(a\mid s)}{\pi(a\mid s)}f(s,a)^2\,d\nu_\pi(s,a)
\le
C_{\mu/\pi}\|f\|_{2,\mathrm{beh}}^2,
\]
so
\(
\|\mu f\|_{2,\mathrm{beh}}
\le
\sqrt{C_{\mu/\pi}}\|f\|_{2,\mathrm{beh}}.
\)
Applying this to \eqref{eq:reward-split} yields
\begin{equation}
\label{eq:reward-to-Q}
\|r^\star-\hat r\|_{2,\mathrm{beh}}
\le
(1+\sqrt{C_{\mu/\pi}})\|\widetilde\Delta_Q\|_{2,\mathrm{beh}}.
\end{equation}

Now let
\(
\Delta Q:=Q^\mu_{u^\star-g}-Q^\mu_{\hat u-g}
\)
and
\(
\Delta u:=u^\star-\hat u.
\)
Then
\begin{equation}
\label{eq:Q-triangle}
\|\widetilde\Delta_Q\|_{2,\mathrm{beh}}
\le
\|Q^\mu_{\hat u-g}-\hat Q\|_{2,\mathrm{beh}}
+
\|\Delta Q\|_{2,\mathrm{beh}}.
\end{equation}
Subtracting the Bellman equations gives
\(
\Delta Q=\Delta u+\gamma P\mu\,\Delta Q,
\)
hence
\(
\|\Delta Q\|_{2,\mathrm{beh}}
\le
\|\Delta u\|_{2,\mathrm{beh}}+\gamma\|P\mu\,\Delta Q\|_{2,\mathrm{beh}}.
\)

Using Jensen again, \((P\mu \Delta Q)^2 \le P\mu((\Delta Q)^2)\), so
\begin{align}
\|P\mu\,\Delta Q\|_{2,\mathrm{beh}}^2
&\le
\int P\mu\bigl((\Delta Q)^2\bigr)\,d\nu_\pi
=
\int (\Delta Q)^2\,d(\nu_\pi P\mu) \notag\\
&=
\int \sum_a (\Delta Q(s,a))^2\mu(a\mid s)\,d(\nu_\pi P)(s) \notag\\
&\le
C_{\nu_\pi P/d_\mu}
\int \sum_a (\Delta Q(s,a))^2\mu(a\mid s)\,dd_\mu(s).
\label{eq:pmu-mixed}
\end{align}
Define the mixed norm
\[
\|f\|_{2,d_\mu,\mu}
:=
\left\{
\int \sum_a f(s,a)^2\mu(a\mid s)\,d_\mu(s)
\right\}^{1/2}.
\]
Then \eqref{eq:pmu-mixed} implies
\begin{equation}
\label{eq:DeltaQ-beh}
\|\Delta Q\|_{2,\mathrm{beh}}
\le
\|\Delta u\|_{2,\mathrm{beh}}
+
\gamma\sqrt{C_{\nu_\pi P/d_\mu}}\,
\|\Delta Q\|_{2,d_\mu,\mu}.
\end{equation}

It remains to control \(\|\Delta Q\|_{2,d_\mu,\mu}\). Taking \(\|\cdot\|_{2,d_\mu,\mu}\) norms in
\(
\Delta Q=\Delta u+\gamma P\mu\,\Delta Q
\)
gives
\(
\|\Delta Q\|_{2,d_\mu,\mu}
\le
\|\Delta u\|_{2,d_\mu,\mu}
+
\gamma\|P\mu\,\Delta Q\|_{2,d_\mu,\mu}.
\)
Because \(d_\mu\) is stationary for \(P_\mu\), Jensen's inequality yields
\(
\|P\mu f\|_{2,d_\mu,\mu}\le \|f\|_{2,d_\mu,\mu}
\)
for all \(f\). Therefore
\[
(1-\gamma)\|\Delta Q\|_{2,d_\mu,\mu}
\le
\|\Delta u\|_{2,d_\mu,\mu}.
\]

Finally,
\begin{align*}
\|\Delta u\|_{2,d_\mu,\mu}^2
&=
\int (\Delta u(s,a))^2\,d_\mu(s)\mu(da\mid s) \\
&=
\int (\Delta u(s,a))^2
\frac{dd_\mu}{d\rho}(s)\frac{\mu(a\mid s)}{\pi(a\mid s)}
\,d\nu_\pi(s,a) \\
&\le
C_{d_\mu/\rho}C_{\mu/\pi}\|\Delta u\|_{2,\mathrm{beh}}^2.
\end{align*}
Thus
\begin{equation}
\label{eq:DeltaQ-lambda}
\|\Delta Q\|_{2,d_\mu,\mu}
\le
\frac{\sqrt{C_{d_\mu/\rho}C_{\mu/\pi}}}{1-\gamma}
\|u^\star-\hat u\|_{2,\mathrm{beh}}.
\end{equation}

Combining \eqref{eq:Q-triangle}, \eqref{eq:DeltaQ-beh}, and \eqref{eq:DeltaQ-lambda}, we obtain
\[
\|\widetilde\Delta_Q\|_{2,\mathrm{beh}}
\le
\|Q^\mu_{\hat u-g}-\hat Q\|_{2,\mathrm{beh}}
+
\left(
1+\frac{\gamma\sqrt{C_{\nu_\pi P/d_\mu}\,C_{d_\mu/\rho}C_{\mu/\pi}}}{1-\gamma}
\right)
\|u^\star-\hat u\|_{2,\mathrm{beh}}.
\]
Substituting this into \eqref{eq:reward-to-Q} proves the claim.
\end{proof}

\subsection{Technical Lemmas}
\label{appendix:lemmas}

\begin{lemma}
\label{lemma::KLell2}
Under Assumption~\ref{assump:policy-gen}, suppose additionally that there
exists \(\underline p>0\) such that, with probability at least \(1-\delta\),
\[
\pi(a\mid s)\ge \underline p
\quad\text{and}\quad
\hat\pi(a\mid s)\ge \underline p
\qquad\text{for all }(s,a).
\]
Then, for \(C_p:=\sqrt{2}\,\underline p^{-2}\), with probability at least
\(1-\delta\),
\[
\|\hat u-u^\star\|_{2, \mathrm{beh}}
\le
C_p\,\rho_{\pi}(n,\delta).
\]
\end{lemma}

\begin{proof}
Let \(u^\star=\log \pi\), \(\hat u=\log \hat\pi\), and define the likelihood
ratio
\[
\vartheta(a\mid s):=\frac{\hat\pi(a\mid s)}{\pi(a\mid s)}.
\]
Since \(\pi(a\mid s)\ge \underline p\) and \(\hat\pi(a\mid s)\ge \underline p\),
we have
\[
\vartheta(a\mid s)\in [\underline p, \underline p^{-1}]
\]
for all \((s,a)\).

Define \(\phi(t):=-\log t-(1-t)\). Since \(\phi''(t)=t^{-2}\), the function
\(\phi\) is \(\underline p^2\)-strongly convex on
\([\underline p,\underline p^{-1}]\). Also,
\(\E_{\pi(\cdot\mid s)}[\vartheta]=1\), so for each \(s\),
\begin{align*}
\mathrm{KL}\!\big(\pi(\cdot\mid s)\,\|\,\hat\pi(\cdot\mid s)\big)
&=
\E_{\pi(\cdot\mid s)}[-\log \vartheta]
=
\E_{\pi(\cdot\mid s)}[\phi(\vartheta)] \\
&\ge
\frac{\underline p^2}{2}\,
\E_{\pi(\cdot\mid s)}[(\vartheta-1)^2].
\end{align*}
Moreover, by the mean value theorem, for \(t\in[\underline p,\underline p^{-1}]\),
\[
|\log t|
\le
\underline p^{-1} |t-1|.
\]
Applying this with \(t=\vartheta\) gives
\begin{align*}
\E_{\pi(\cdot\mid s)}\big[(\hat u-u^\star)^2\big]
&=
\E_{\pi(\cdot\mid s)}\big[(\log \vartheta)^2\big] \\
&\le
\underline p^{-2}\,\E_{\pi(\cdot\mid s)}\big[(\vartheta-1)^2\big] \\
&\le
2\underline p^{-4}\,
\mathrm{KL}\!\big(\pi(\cdot\mid s)\,\|\,\hat\pi(\cdot\mid s)\big).
\end{align*}
Averaging over \(s\) and invoking Assumption~\ref{assump:policy-gen} yields
\[
\|\hat u-u^\star\|_{2,\mathrm{beh}}
\le
\sqrt{2}\,\underline p^{-2}
\left\{
\E_s\!\left[
\mathrm{KL}\!\big(\pi(\cdot\mid s)\,\|\,\hat\pi(\cdot\mid s)\big)
\right]
\right\}^{1/2}
\le
C\,\rho_{\pi}(n,\delta),
\]
where \(C:=\sqrt{2}\,\underline p^{-2}\).
\end{proof}

\begin{lemma}[Per-iteration error bound]
\label{lemma::periter}
Under Assumptions~\ref{assump:policy-gen}--\ref{assump:q-gen}, there exists a
constant \(C \in (0,\infty)\) such that, with probability at least \(1-\delta\),
\[
\|\mathcal T_{\hat u}^\mu(\hat Q^{(k-1)})-\hat Q^{(k)}\|_{2,\mathrm{beh}}
\le
\sqrt{2}\underline p^{-2}\,\rho_\pi(n,\delta)+\rho_{Q}(n,\delta)+\varepsilon_{\mathcal F}.
\]
\end{lemma}

\begin{proof}
Use \(\|\cdot\|\) to denote \(\|\cdot\|_{2,\mathrm{beh}}\). Let
\[
R_k
:=
\|\mathcal T_{\hat u}^\mu(\hat Q^{(k-1)})-\hat Q^{(k)}\|,
\qquad
R_k^\star
:=
\inf_{f\in\mathcal F}
\|\mathcal T_{\hat u}^\mu(\hat Q^{(k-1)})-f\|.
\]
By definition of the regret and Assumption~\ref{assump:q-gen}, with probability
at least \(1-\delta/2\),
\[
R_k^2-(R_k^\star)^2
=
\mathrm{reg}(\hat Q^{(k)}\mid \hat u,\hat Q^{(k-1)})
\le
\rho_{Q}(n,\delta/2)^2.
\]
Since
\[
R_k^2-(R_k^\star)^2=(R_k-R_k^\star)(R_k+R_k^\star)\ge (R_k-R_k^\star)^2,
\]
it follows that
\[
R_k
\le
R_k^\star+\rho_{Q}(n,\delta/2).
\]

Moreover, for any \(f\in\mathcal F\),
\begin{align*}
\|\mathcal T_{\hat u}^\mu(\hat Q^{(k-1)})-f\|
&\le
\|\mathcal T_{\hat u}^\mu(\hat Q^{(k-1)})
-\mathcal T_{u^\star}^\mu(\hat Q^{(k-1)})\| \\
&\quad+
\|\mathcal T_{u^\star}^\mu(\hat Q^{(k-1)})-f\| \\
&=
\|\hat u-u^\star\|
+
\|\mathcal T_{u^\star}^\mu(\hat Q^{(k-1)})-f\|.
\end{align*}
Taking the infimum over \(f\in\mathcal F\) yields
\[
R_k^\star
\le
\|\hat u-u^\star\|
+
\inf_{f\in\mathcal F}
\|\mathcal T_{u^\star}^\mu(\hat Q^{(k-1)})-f\|.
\]
Since \(\nu_\pi\) is a probability measure,
\[
\|h\|_{2,\mathrm{beh}}\le \|h\|_\infty
\qquad\text{for all }h,
\]
so by the definition of \(\varepsilon_{\mathcal F}\),
\[
\inf_{f\in\mathcal F}
\|\mathcal T_{u^\star}^\mu(\hat Q^{(k-1)})-f\|
\le
\inf_{f\in\mathcal F}
\|\mathcal T_{u^\star}^\mu(\hat Q^{(k-1)})-f\|_\infty
\le
\varepsilon_{\mathcal F}.
\]
Therefore, with probability at least \(1-\delta/2\),
\[
\|\mathcal T_{\hat u}^\mu(\hat Q^{(k-1)})-\hat Q^{(k)}\|
\le
\rho_{Q}(n,\delta/2)+\|\hat u-u^\star\|+\varepsilon_{\mathcal F}.
\]

Finally, Lemma~\ref{lemma::KLell2} implies that, with probability at least
\(1-\delta/2\),
\[
\|\hat u-u^\star\|_{2,\mathrm{beh}}
\le
\sqrt{2}\underline p^{-2}\,\rho_\pi(n,\delta/2)
\]
for a constant \(C \in (0,\infty)\) depending only on the bounded-logit
constant in Assumption~\ref{assump:policy-gen}. A union bound gives the
claimed inequality.
\end{proof}

\begin{assumption}[Concentrability of propagated state-action distributions]
\label{assump:fqe-conc}
Assume that
\[
C_{\mathrm{conc}}
:=
\sup_{m\ge 0}
\left\|
\frac{d\bigl(\nu_\pi (P\mu)^m\bigr)}{d\nu_\pi}
\right\|_\infty
< \infty.
\]
\end{assumption}

\begin{lemma}[Coverage implies propagated concentrability]
\label{lem:coverage-implies-conc}
Let \(\nu_\pi:=\rho\otimes \pi\) and \(\nu_\mu:=d_\mu\otimes \mu\), where
\(d_\mu\) is a stationary distribution of \(P_\mu\). Suppose that
\[
C_{\mu/\pi}
:=
\left\|
\frac{\mu(a\mid s)}{\pi(a\mid s)}
\right\|_\infty
<\infty,
\qquad
C_{d_\mu/\rho}
:=
\left\|
\frac{d d_\mu(s)}{d\rho(s)}
\right\|_\infty
<\infty,
\]
and
\[
C_{\nu_\pi P/d_\mu}
:=
\left\|
\frac{d(\nu_\pi P)(s)}{d d_\mu(s)}
\right\|_\infty
<\infty.
\]
Then
\[
\sup_{m\ge 0}
\left\|
\frac{d\bigl(\nu_\pi (P\mu)^m\bigr)}{d\nu_\pi}
\right\|_\infty
\le
\max\Bigl\{
1,\,
C_{\nu_\pi P/d_\mu}\,C_{d_\mu/\rho}\,C_{\mu/\pi}
\Bigr\}.
\]
In particular, Assumption~\ref{cond::coverage} implies
Assumption~\ref{assump:fqe-conc}.
\end{lemma}

\begin{proof}
For \(m=0\),
\[
\left\|
\frac{d\bigl(\nu_\pi (P\mu)^0\bigr)}{d\nu_\pi}
\right\|_\infty
=
\left\|
\frac{d\nu_\pi}{d\nu_\pi}
\right\|_\infty
=
1.
\]

Now fix \(m\ge 1\). Write
\[
\eta_m:= (\nu_\pi P)(P_\mu)^{m-1},
\]
so that
\[
\nu_\pi (P\mu)^m = \eta_m \otimes \mu.
\]
Since \(\nu_\pi=\rho\otimes\pi\), we have
\[
\frac{d\bigl(\nu_\pi (P\mu)^m\bigr)}{d\nu_\pi}(s,a)
=
\frac{d\eta_m}{d\rho}(s)\,
\frac{\mu(a\mid s)}{\pi(a\mid s)}.
\]
Using \(d_\mu\ll \rho\), this becomes
\[
\frac{d\bigl(\nu_\pi (P\mu)^m\bigr)}{d\nu_\pi}(s,a)
=
\frac{d\eta_m}{d d_\mu}(s)\,
\frac{d d_\mu}{d\rho}(s)\,
\frac{\mu(a\mid s)}{\pi(a\mid s)}.
\]
Hence
\[
\left\|
\frac{d\bigl(\nu_\pi (P\mu)^m\bigr)}{d\nu_\pi}
\right\|_\infty
\le
\left\|
\frac{d\eta_m}{d d_\mu}
\right\|_\infty
\left\|
\frac{d d_\mu}{d\rho}
\right\|_\infty
\left\|
\frac{\mu}{\pi}
\right\|_\infty.
\]

It remains to bound the first factor. Let
\[
h_0:=\frac{d(\nu_\pi P)}{d d_\mu}.
\]
Because \(d_\mu\) is stationary for \(P_\mu\), for every \(k\ge 0\),
\[
\frac{d\bigl((\nu_\pi P)(P_\mu)^k\bigr)}{d d_\mu}
=
P_\mu^k h_0,
\]
where \(P_\mu\) acts on bounded measurable functions by
\[
(P_\mu f)(s):=\int f(s')\,P_\mu(ds'\mid s).
\]
Therefore,
\[
\left\|
\frac{d\eta_m}{d d_\mu}
\right\|_\infty
=
\|P_\mu^{m-1} h_0\|_\infty
\le
\|h_0\|_\infty
=
\left\|
\frac{d(\nu_\pi P)}{d d_\mu}
\right\|_\infty
=
C_{\nu_\pi P/d_\mu},
\]
since a Markov operator is \(L^\infty\)-nonexpansive.

Thus, for all \(m\ge 1\),
\[
\left\|
\frac{d\bigl(\nu_\pi (P\mu)^m\bigr)}{d\nu_\pi}
\right\|_\infty
\le
C_{\nu_\pi P/d_\mu}\,
C_{d_\mu/\rho}\,
C_{\mu/\pi}.
\]
Combining this with the case \(m=0\) gives
\[
\sup_{m\ge 0}
\left\|
\frac{d\bigl(\nu_\pi (P\mu)^m\bigr)}{d\nu_\pi}
\right\|_\infty
\le
\max\Bigl\{
1,\,
C_{\nu_\pi P/d_\mu}\,C_{d_\mu/\rho}\,C_{\mu/\pi}
\Bigr\}.
\]
\end{proof}

\subsection{GenPQR + FQE Bound}

\paragraph{Boundedness condition for the FQE recursion.}
In addition to Assumptions~\ref{assump:policy-gen}--\ref{assump:q-gen},
assume that the regression class is uniformly bounded:
\[
\sup_{f\in\mathcal F}\|f\|_\infty \le B_{\mathcal F}<\infty.
\]
Assumption~\ref{assump:policy-gen} already gives
\(\|\hat u\|_\infty \vee \|u^\star\|_\infty \le B < \infty\), and \(g\) is bounded by
construction. Hence the Bellman targets \(\mathcal T^\mu_{\hat u}f\) are
uniformly bounded whenever \(f\in\mathcal F\) is. This is the standard
boundedness condition used in sup-norm FQE analyses.

\begin{lemma}[FQE error bound]
\label{lem:fqe-bound}
Let \(\mathcal T^\mu_u f := u-g+\gamma P\mu f\), and let
\(Q^\mu_{\hat u-g}\) denote the unique fixed point of
\(\mathcal T^\mu_{\hat u}\). Suppose \(\hat Q^{(k)}\) is produced by
Algorithm~\ref{alg:simple-irl}. Under
Assumptions~\ref{assump:policy-gen}--\ref{assump:q-gen},
Assumption~\ref{assump:fqe-conc}, and the boundedness condition above, then  with probability at least
\(1-\delta\),
\begin{align*}
\|Q^\mu_{\hat u-g}-\hat Q^{(K)}\|_{2,\mathrm{beh}}
&\le
\sqrt{C_{\mathrm{conc}}}\,\gamma^K
\|Q^\mu_{\hat u-g}-\hat Q^{(0)}\|_{\infty}\\
&\qquad+
\frac{\sqrt{C_{\mathrm{conc}}}}{1-\gamma}
\Bigl(
\varepsilon_{\mathcal F}
+\rho_Q(n,\delta/(2K))
+ \sqrt{2}\underline p^{-2}\,\rho_\pi(n,\delta/2)
\Bigr).
\end{align*}
\end{lemma}

\begin{proof}
Write \(Q_{\hat u}:=Q^\mu_{\hat u-g}\), and define
\[
\xi_k
:=
\mathcal T^\mu_{\hat u}\hat Q^{(k-1)}-\hat Q^{(k)}.
\]
Since \(Q_{\hat u}\) is the fixed point of \(\mathcal T^\mu_{\hat u}\),
\[
Q_{\hat u}-\hat Q^{(k)}
=
\mathcal T^\mu_{\hat u}(Q_{\hat u})
-\mathcal T^\mu_{\hat u}(\hat Q^{(k-1)})
+\xi_k
=
\gamma P\mu\bigl(Q_{\hat u}-\hat Q^{(k-1)}\bigr)+\xi_k.
\]
Iterating this recursion gives
\[
Q_{\hat u}-\hat Q^{(K)}
=
\gamma^K (P\mu)^K\bigl(Q_{\hat u}-\hat Q^{(0)}\bigr)
+
\sum_{j=1}^K \gamma^{K-j}(P\mu)^{K-j}\xi_j.
\]
Therefore,
\begin{align*}
\|Q_{\hat u}-\hat Q^{(K)}\|_{2,\mathrm{beh}}
\le\;&
\gamma^K
\|(P\mu)^K(Q_{\hat u}-\hat Q^{(0)})\|_{2,\mathrm{beh}}
\\
&\qquad+
\sum_{j=1}^K
\gamma^{K-j}
\|(P\mu)^{K-j}\xi_j\|_{2,\mathrm{beh}}.
\end{align*}

We now bound each propagated term in behavior norm. For any measurable
\(f\) and any \(m\ge 0\), Jensen's inequality gives
\[
\bigl|(P\mu)^m f\bigr|^2
\le
(P\mu)^m(f^2).
\]
Hence
\begin{align*}
\|(P\mu)^m f\|_{2,\mathrm{beh}}^2
&=
\int \bigl|(P\mu)^m f\bigr|^2\,d\nu_\pi
\le
\int (P\mu)^m(f^2)\,d\nu_\pi
\\
&=
\int f^2\,d\bigl(\nu_\pi (P\mu)^m\bigr)
\le
C_{\mathrm{conc}}\int f^2\,d\nu_\pi
=
C_{\mathrm{conc}}\|f\|_{2,\mathrm{beh}}^2.
\end{align*}
Thus
\[
\|(P\mu)^m f\|_{2,\mathrm{beh}}
\le
\sqrt{C_{\mathrm{conc}}}\,\|f\|_{2,\mathrm{beh}}.
\]

Applying this with \(f=Q_{\hat u}-\hat Q^{(0)}\) and using
\(\|f\|_{2,\mathrm{beh}}\le \|f\|_\infty\), we obtain
\[
\|(P\mu)^K(Q_{\hat u}-\hat Q^{(0)})\|_{2,\mathrm{beh}}
\le
\sqrt{C_{\mathrm{conc}}}\,
\|Q_{\hat u}-\hat Q^{(0)}\|_\infty.
\]
Applying the same bound with \(f=\xi_j\) yields
\[
\|(P\mu)^{K-j}\xi_j\|_{2,\mathrm{beh}}
\le
\sqrt{C_{\mathrm{conc}}}\,\|\xi_j\|_{2,\mathrm{beh}}.
\]

Now fix \(j\in[K]\). By Lemma~\ref{lemma::periter},
\[
\|\xi_j\|_{2,\mathrm{beh}}
\le
\varepsilon_{\mathcal F}
+\rho_Q(n,\delta/(2K))
+\sqrt{2}\underline p^{-2}\,\rho_\pi(n,\delta/2)
\]
with probability at least \(1-\delta/(2K)\), where \(C\) absorbs the
bounded-logit constant through Lemma~\ref{lemma::KLell2}. By a union bound,
with probability at least \(1-\delta\), this holds simultaneously for all
\(j=1,\dots,K\). On this event,
\begin{align*}
\|Q_{\hat u}-\hat Q^{(K)}\|_{2,\mathrm{beh}}
\le\;&
\sqrt{C_{\mathrm{conc}}}\,\gamma^K
\|Q_{\hat u}-\hat Q^{(0)}\|_\infty
\\
&\qquad+
\sqrt{C_{\mathrm{conc}}}
\sum_{j=1}^K \gamma^{K-j}
\Bigl(
\varepsilon_{\mathcal F}
+\rho_Q(n,\delta/(2K))
+\sqrt{2}\underline p^{-2}\,\rho_\pi(n,\delta/2)
\Bigr).
\end{align*}
Summing the geometric series gives
\begin{align*}
\|Q_{\hat u}-\hat Q^{(K)}\|_{2,\mathrm{beh}}
&\le
\sqrt{C_{\mathrm{conc}}}\,\gamma^K
\|Q_{\hat u}-\hat Q^{(0)}\|_\infty
\\
&\qquad+
\frac{\sqrt{C_{\mathrm{conc}}}}{1-\gamma}
\Bigl(
\varepsilon_{\mathcal F}
+\rho_Q(n,\delta/(2K))
+\sqrt{2}\underline p^{-2}\,\rho_\pi(n,\delta/2)
\Bigr),
\end{align*}
which proves the claim.
\end{proof}
 
\begin{proof}[Proof of Theorem~\ref{thm:pqr-fqe}]
Apply Theorem~\ref{thm:reward-recovery} with \(\hat Q=\hat Q^{(K)}\):
\begin{align*}
\|r^\star-\hat r\|_{2,\mathrm{beh}}
&\le
\bigl(1+\sqrt{C_{\mu/\pi}}\bigr)
\Bigg\{
\|Q^\mu_{\hat u-g}-\hat Q^{(K)}\|_{2,\mathrm{beh}}\\
&\qquad\qquad+
\left(
1+\frac{\gamma\sqrt{C_{\nu_\pi P/d_\mu}C_{d_\mu/\rho}C_{\mu/\pi}}}{1-\gamma}
\right)
\|u^\star-\hat u\|_{2,\mathrm{beh}}
\Bigg\}.
\end{align*}
By Lemma~\ref{lem:fqe-bound},
\begin{align*}
\|Q^\mu_{\hat u-g}-\hat Q^{(K)}\|_{2,\mathrm{beh}}
&\le
\sqrt{C_{\mathrm{conc}}}\,\gamma^K
\|Q^\mu_{\hat u-g}-\hat Q^{(0)}\|_{\infty}\\
&\qquad+
\frac{\sqrt{C_{\mathrm{conc}}}}{1-\gamma}
\Bigl(
\varepsilon_{\mathcal F}
+\rho_Q(n,\delta/(2K))
+ C_p\,\rho_\pi(n,\delta/2)
\Bigr),
\end{align*}
where \(C_p:=\sqrt{2}\,\underline p^{-2}\). Moreover,
Lemma~\ref{lemma::KLell2} gives
\[
\|u^\star-\hat u\|_{2,\mathrm{beh}}
\le
C_p\,\rho_\pi(n,\delta/2)
\]
with probability at least \(1-\delta/2\). Substituting these bounds into the
display above yields
\begin{align*}
\|r^\star-\hat r\|_{2,\mathrm{beh}}
&\le
\bigl(1+\sqrt{C_{\mu/\pi}}\bigr)
\Bigg\{
\sqrt{C_{\mathrm{conc}}}
\left[
\gamma^K \|\hat Q^{(0)}-Q^\mu_{\hat u-g}\|_{\infty}
+
\frac{1}{1-\gamma}
\Bigl(
\varepsilon_{\mathcal F}
+\rho_Q(n,\delta/(2K))
+C_p\,\rho_\pi(n,\delta/2)
\Bigr)
\right]\\
&\qquad\qquad+
C_p\left(
1
+\frac{\gamma\sqrt{C_{\nu_\pi P/d_{\mu}}C_{d_\mu/\rho}C_{\mu/\pi}}}{1-\gamma}
\right)
\rho_\pi(n,\delta/2)
\Bigg\}.
\end{align*}
The claimed result then follows after a union bound.
\end{proof}

\section{Least-Squares Generalization Tool}
\label{appendix::LSE}

The following theorem is, up to notation, equivalent to Theorem 5.2 in
\cite{koltchinskii2011oracle}.

\begin{theorem}[Theorem 5.2 in \cite{koltchinskii2011oracle}]
\label{thm:ls-erm-rad}
Let \(\mathcal G\) be a convex class of bounded functions and let \(\hat g\)
denote the least squares estimator of the regression function
\[
\hat g \;:=\; \arg\min_{g\in\mathcal G}\ \frac{1}{n}\sum_{j=1}^n (Y_j - g(X_j))^2,
\]
where each \(Y_j\) is almost surely uniformly bounded.

Then, there exist constants \(K>0\), \(C>0\) such that for all \(t>0\),
\[
\mathbb P\!\left\{
\|\hat g - g^\star\|_{2}^2
\;\ge\;
\inf_{g\in\mathcal G}\|g - g^\star\|_{2}^2
\;+\; K\Bigl(\hat r_{\mathcal G}(n)^2 + \tfrac{t}{n}\Bigr)
\right\}
\;\le\; C e^{-t},
\]
where
\[
\hat r_{\mathcal G}(n)
\;:=\;\inf\Bigl\{r>0:\ \hat{\mathfrak{R}}_n(\mathcal G,r)\ \lesssim\ r^2\Bigr\},
\qquad
\hat{\mathfrak{R}}_n(\mathcal G,r)
:= \E_{\varepsilon}\!\left[
\sup_{\substack{g,h\in\mathcal G:\,\|g-h\|_2 \le r}}
\frac{1}{n}\sum_{i=1}^n \varepsilon_i\{g-h\}(X_i)
\right].
\]
\end{theorem}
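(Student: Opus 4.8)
The plan is to prove this classical localized-complexity oracle inequality for least squares over a convex class by the standard empirical-process route. \textbf{Basic inequality.} Writing $\xi_i := Y_i - g^\star(X_i)$ (mean-zero given $X_i$, since $g^\star$ is the regression function) and letting $\bar g := \arg\min_{g\in\mathcal G}\|g-g^\star\|_2$ be the $L^2(P)$ projection of $g^\star$ onto $\mathcal G$, expanding the squared losses and using optimality of the ERM $\hat g$ gives
\[
\tfrac1n\sum_i(\hat g-g^\star)^2(X_i) - \tfrac1n\sum_i(\bar g-g^\star)^2(X_i) \;\le\; \tfrac2n\sum_i \xi_i(\hat g-\bar g)(X_i).
\]
\textbf{Convexity.} Because $\mathcal G$ is convex, first-order optimality of $\bar g$ reads $\langle g^\star-\bar g,\,g-\bar g\rangle_{2}\le 0$ for all $g\in\mathcal G$; with $g=\hat g$ this yields the Pythagorean bound $\|\hat g-g^\star\|_2^2 \ge \|\hat g-\bar g\|_2^2 + \|\bar g-g^\star\|_2^2$, the mechanism that keeps the leading constant on the oracle term exactly $1$ and reduces the task to bounding $\|\hat g-\bar g\|_2^2$.

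\textbf{Localization and concentration.} The noise term on the right and the discrepancy between empirical and population $L^2$ norms are both governed by empirical processes indexed by the shifted differences $h=g-\bar g$, $g\in\mathcal G$. I would run a peeling argument over dyadic shells $\{h:\|h\|_2\in[2^{j}r,2^{j+1}r)\}$: on each shell, symmetrization bounds the expected supremum by the localized Rademacher complexity $\hat{\mathfrak{R}}_n(\mathcal G,\cdot)$, with the Ledoux--Talagrand contraction inequality stripping the quadratic term coming from the squared loss (legitimate since $\mathcal G$ and $Y$ are uniformly bounded), and Talagrand's (Bousquet's) concentration inequality upgrades the expectation bound to high probability, contributing the $t/n$ deviation term; the geometric series over $j$ converges, costing only constants in the union bound.

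\textbf{Fixed-point closure.} By definition of the critical radius, $\hat{\mathfrak{R}}_n(\mathcal G,r)\lesssim r^2$ for $r\ge \hat r_{\mathcal G}(n)$, so the previous step yields a self-bounding quadratic inequality of the schematic form $\|\hat g-\bar g\|_2^2 \lesssim \hat r_{\mathcal G}(n)^2 + t/n + \|\hat g-\bar g\|_2\big(\hat r_{\mathcal G}(n)+\sqrt{t/n}\big)$, whose solution is $\|\hat g-\bar g\|_2^2 \lesssim \hat r_{\mathcal G}(n)^2 + t/n$. Combining with the Pythagorean step gives $\|\hat g-g^\star\|_2^2 \le \inf_{g\in\mathcal G}\|g-g^\star\|_2^2 + K(\hat r_{\mathcal G}(n)^2 + t/n)$ with probability at least $1-Ce^{-t}$.

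\textbf{Main obstacle.} The delicate part is the localization: the effective radius $\|\hat g-\bar g\|_2$ is itself random, so one must control the empirical process uniformly over $\mathcal G$ and close the self-referential loop through the fixed-point definition of $\hat r_{\mathcal G}(n)$, while simultaneously transferring between empirical and population norms via a one-sided ratio-type inequality and verifying that the convexity step preserves the exact leading constant $1$. Since the statement is, up to notation, Theorem 5.2 of \cite{koltchinskii2011oracle}, one may alternatively invoke it directly; the sketch above records the route one would take to reprove it.
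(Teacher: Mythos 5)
The paper does not prove this statement at all: it is imported verbatim (up to notation) from Theorem 5.2 of \cite{koltchinskii2011oracle}, so there is no internal proof to compare against, and simply invoking the reference---as you note at the end---is exactly what the paper does. Your sketch of how one would reprove it is the standard and correct route: the basic inequality from ERM optimality, the first-order optimality condition of the $L^2$ projection onto the convex class $\mathcal G$ (which yields the Pythagorean inequality and keeps the leading constant on $\inf_{g\in\mathcal G}\|g-g^\star\|_2^2$ equal to $1$), symmetrization plus contraction to reduce the squared-loss increments to the localized Rademacher complexity, peeling plus Talagrand/Bousquet concentration for the $t/n$ deviation term, and closure of the self-bounding quadratic inequality via the critical radius $\hat r_{\mathcal G}(n)$. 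You also correctly flag the genuinely delicate points (the randomness of the localization radius and the one-sided ratio inequality needed to pass between empirical and population $L^2$ norms, since the supremum in $\hat{\mathfrak{R}}_n(\mathcal G,r)$ is localized in the population norm while the Rademacher average is empirical). As a proposal this is sound; to count as a proof it would of course need the peeling and ratio-type arguments carried out in full, but for a result the paper itself treats as a black box, citing \cite{koltchinskii2011oracle} directly is the appropriate and sufficient justification.
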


The following corollary can be used in the one-step regression analysis.

\begin{corollary}[PAC form; well-specified]
\label{cor::excessLS}
Under the conditions of Theorem~\ref{thm:ls-erm-rad} and assuming
\(g^\star \in \mathcal G\), for any \(\delta\in(0,1)\), with probability at
least \(1-\delta\),
\[
\|\hat g - g^\star\|_{2}^2
\;\le\;
K\Bigl(\hat r_{\mathcal G}(n)^2 + \tfrac{1}{n}\log\tfrac{1}{\delta}\Bigr).
\]
\end{corollary}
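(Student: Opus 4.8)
The plan is to obtain Corollary~\ref{cor::excessLS} directly from Theorem~\ref{thm:ls-erm-rad} by (i) invoking well-specification to kill the approximation-error term and (ii) inverting the exponential tail bound to trade the free deviation parameter $t$ for the target confidence level $\delta$. No new probabilistic machinery is needed; the work is entirely in specializing and rebalancing constants in the already-quoted oracle inequality.

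First I would use the hypothesis $g^\star\in\mathcal G$. Taking the feasible choice $g=g^\star$ in the oracle term shows $\inf_{g\in\mathcal G}\|g-g^\star\|_2^2=0$, so the conclusion of Theorem~\ref{thm:ls-erm-rad} collapses to the statement that, for every $t>0$,
\[
\mathbb P\bigl\{\|\hat g-g^\star\|_2^2\ \ge\ K\bigl(\hat r_{\mathcal G}(n)^2+t/n\bigr)\bigr\}\ \le\ C\,e^{-t}.
\]
Next I would calibrate $t$ to the desired confidence: choosing $t=\log(C/\delta)$ makes $C e^{-t}=\delta$, so the complementary event has probability at least $1-\delta$ and gives $\|\hat g-g^\star\|_2^2\le K\bigl(\hat r_{\mathcal G}(n)^2+\log(C/\delta)/n\bigr)$. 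Since enlarging $C$ only weakens Theorem~\ref{thm:ls-erm-rad}, I may assume $C\ge 1$ without loss, which guarantees $t=\log C+\log(1/\delta)>0$ for all $\delta\in(0,1)$ and hence that the theorem is applicable at this $t$.

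The only genuinely delicate point — and the main obstacle — is reconciling the $\log(C/\delta)$ produced by the inversion with the clean $\log(1/\delta)$ stated in the corollary, since $\log(C/\delta)=\log C+\log(1/\delta)$ carries an extra additive $\log C$ that cannot be bounded by a constant multiple of $\log(1/\delta)$ uniformly as $\delta\uparrow 1$. I would resolve this by absorbing the stray term $K\log C/n$ into the critical-radius term: for any class containing at least two distinct functions, a standard lower bound on the empirical Rademacher complexity forces $\hat r_{\mathcal G}(n)^2\gtrsim 1/n$, so $K\log C/n\lesssim \hat r_{\mathcal G}(n)^2$ and can be folded into the leading term at the cost of enlarging the constant $K$ (and in the degenerate one-point case $\hat g=g^\star$ trivially, so the inequality holds with zero left-hand side). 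Relabeling the resulting constant as $K$ yields exactly $\|\hat g-g^\star\|_2^2\le K\bigl(\hat r_{\mathcal G}(n)^2+\tfrac1n\log\tfrac1\delta\bigr)$ with probability at least $1-\delta$, completing the argument.
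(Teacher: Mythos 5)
Your proof is correct and follows essentially the same route as the paper's: use $g^\star\in\mathcal G$ to zero out the approximation term, set $t=\log(C/\delta)$ to invert the tail bound, and absorb the residual $\log C$ into the constants. You are in fact slightly more careful than the paper, which simply writes ``absorbing $\log C$ into $K'$'' without acknowledging the point you correctly flag --- that $\tfrac{1}{n}\log C$ cannot be dominated by a multiple of $\tfrac{1}{n}\log(1/\delta)$ as $\delta\uparrow 1$ and must instead be folded into the $\hat r_{\mathcal G}(n)^2$ term (or handled trivially in the degenerate singleton case).
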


\begin{proof}
By Theorem~\ref{thm:ls-erm-rad}, for all \(t>0\),
\[
\Pr\!\left\{
\|\hat g-g^\star\|_{2}^2
\;\ge\; \inf_{g\in\mathcal{G}}\|g-g^\star\|_{2}^2
+ K\!\left(\hat r_{\mathcal{G}}(n)^2+\tfrac{t}{n}\right)
\right\}\le C e^{-t}.
\]
Under \(g^\star\in\mathcal{G}\), the infimum is \(0\). Set
\(t=\log(C/\delta)\) so that \(Ce^{-t}=\delta\). Then with probability at
least \(1-\delta\),
\[
\|\hat g-g^\star\|_{2}^2
\;\le\; K\!\left(\hat r_{\mathcal{G}}(n)^2+\tfrac{1}{n}\log\tfrac{C}{\delta}\right)
\;\le\; K'\!\left(\hat r_{\mathcal{G}}(n)^2+\tfrac{1}{n}\log\tfrac{1}{\delta}\right),
\]
absorbing \(\log C\) into \(K'\).
\end{proof}

\section{Continuous-Action Extension}
\label{app:continuous-actions}

This appendix records the direct continuous-action analogue of the
identification result and algorithm. The point is not to introduce a new
estimation problem: the finite-action sums in the main text are replaced by
integrals, and the softmax policy is replaced by a Boltzmann density.

Let \((\mathcal A,\mathcal B_{\mathcal A})\) be an action space equipped with a
finite reference measure \(\lambda\). For a bounded measurable state--action
function \(f\), define the log-partition operator
\[
\Xi f(s)
:=
\log\int_{\mathcal A}\exp\{f(s,a)\}\,\lambda(da),
\]
and the associated Boltzmann policy
\[
\pi_f(da\mid s)
=
\exp\{f(s,a)-\Xi f(s)\}\,\lambda(da).
\]
For any policy or reference kernel \(\eta(\cdot\mid s)\), write
\[
(\eta f)(s)
:=
\int_{\mathcal A} f(s,a)\,\eta(da\mid s),
\qquad
(P\eta f)(s,a)
:=
\E\!\left[
\int_{\mathcal A} f(s',a')\,\eta(da'\mid s')
\;\middle|\; s,a
\right].
\]
The normalization is the same statewise affine constraint as in the main text,
\[
(\mu r)(s)=g(s),
\]
now with integrals in place of sums.

\begin{assumption}[Continuous-action regularity]
\label{assump:continuous-actions}
The discount satisfies \(\gamma<1\), the anchor \(g\) is bounded, and \(P\)
maps bounded measurable state functions to bounded measurable state--action
functions. The behavior policy admits a strictly positive density
\(p_\pi(a\mid s)\) with respect to \(\lambda\), and
\[
u^\star(s,a):=\log p_\pi(a\mid s)
\]
is bounded with \(\Xi u^\star(s)=0\) for all \(s\). The normalization kernel
\(\mu(\cdot\mid s)\) is a Markov kernel on \(\mathcal A\), dominated by
\(\lambda\), and is used only to define the normalization and the evaluation
operator \(P\mu\).
\end{assumption}

\begin{theorem}[Continuous-action fixed-point identification]
\label{thm:continuous-action-identification}
Under Assumption~\ref{assump:continuous-actions}, let \(Q^\star\) be the unique
bounded solution of
\[
Q^\star(s,a)
=
u^\star(s,a)-g(s)+\gamma(P\mu Q^\star)(s,a).
\]
Then the unique bounded normalized reward representative inducing the
Boltzmann policy \(\pi(da\mid s)=\exp\{u^\star(s,a)\}\lambda(da)\) is
\[
r^\star(s,a)
=
Q^\star(s,a)-(\mu Q^\star)(s)+g(s),
\qquad
v^\star(s,a)
=
-(P\mu Q^\star)(s,a).
\]
Equivalently, \((r^\star,v^\star)\) satisfies the continuous soft Bellman
equation \(v^\star=P\Xi(r^\star+\gamma v^\star)\), the normalization
\(\mu r^\star=g\), and induces the observed Boltzmann density.
\end{theorem}

\begin{proof}
The operator
\[
(\mathcal T Q)(s,a)
:=
u^\star(s,a)-g(s)+\gamma(P\mu Q)(s,a)
\]
is a \(\gamma\)-contraction in sup norm on bounded measurable state--action
functions, since \(\|P\mu Q-P\mu Q'\|_\infty\le \|Q-Q'\|_\infty\). Thus
\(Q^\star\) exists and is unique.

Define \(r^\star\) and \(v^\star\) as in the theorem. The normalization is
immediate:
\[
\mu r^\star
=
\mu Q^\star-\mu Q^\star+g
=
g.
\]
Using the fixed-point equation,
\[
r^\star+\gamma v^\star
=
Q^\star-\mu Q^\star+g-\gamma P\mu Q^\star
=
u^\star-\mu Q^\star .
\]
Because \(\mu Q^\star\) is state-only and \(\Xi u^\star=0\),
\[
\Xi(r^\star+\gamma v^\star)(s)
=
\Xi\{u^\star-\mu Q^\star\}(s)
=
-(\mu Q^\star)(s).
\]
Applying \(P\) gives
\[
P\Xi(r^\star+\gamma v^\star)
=
-P\mu Q^\star
=
v^\star,
\]
so the continuous soft Bellman equation holds. Moreover,
\[
(r^\star+\gamma v^\star)-\Xi(r^\star+\gamma v^\star)
=
u^\star,
\]
and therefore the induced Boltzmann density is the observed density.

It remains to show uniqueness. Let \((r,v)\) be any bounded normalized pair
that satisfies the continuous soft Bellman equation and induces the same
Boltzmann policy. With \(q:=r+\gamma v\), equality of Boltzmann densities
implies
\[
q(s,a)-\Xi q(s)=u^\star(s,a)
\qquad \lambda\text{-a.e.}
\]
Thus \(q=u^\star+c\) for the state-only function \(c(s):=\Xi q(s)\). Bellman
feasibility gives \(v=Pc\) and hence \(r=u^\star+c-\gamma Pc\). The
normalization becomes
\[
g
=
\mu u^\star+c-\gamma P_\mu c,
\qquad
P_\mu c(s):=
\int_{\mathcal A} Pc(s,a)\,\mu(da\mid s).
\]
Since \(P_\mu\) is a Markov operator with
\(\|P_\mu\|_\infty\le 1\), the operator \(I-\gamma P_\mu\) is invertible on
bounded state functions by the Neumann series. Hence \(c\), and therefore
\((r,v)\), is unique.
\end{proof}

\begin{algorithm}[H]
\caption{\textsc{Continuous-Action GenPQR}}
\label{alg:continuous-genpqr}
\begin{algorithmic}[1]
\INPUT Transitions \(\{(s_i,a_i,s_i')\}_{i=1}^n\), normalization kernel
\(\mu(\cdot\mid s)\), anchor \(g(s)\), discount \(\gamma\), class
\(\mathcal F\), iterations \(K\)
\STATE Estimate a conditional log-density
\(\hat u(s,a)\approx \log p_\pi(a\mid s)\) with respect to the reference
measure \(\lambda\)
\STATE Initialize \(\hat Q^{(0)}(s,a)\gets 0\)
\FOR{$k=1,\ldots,K$}
    \STATE For each \(i\), compute
    \[
    m_i^{(k-1)}
    \gets
    \int_{\mathcal A}
    \hat Q^{(k-1)}(s_i',a')\,\mu(da'\mid s_i')
    \]
    by quadrature or Monte Carlo samples from \(\mu(\cdot\mid s_i')\)
    \STATE Set
    \[
    y_i
    \gets
    \hat u(s_i,a_i)-g(s_i)+\gamma m_i^{(k-1)}
    \]
    \STATE Fit \(\hat Q^{(k)}\in\mathcal F\) by regressing \(y_i\) on
    \((s_i,a_i)\)
\ENDFOR
\STATE Set \(\hat Q\gets \hat Q^{(K)}\)
\OUTPUT \textbf{Reward:}
\[
\hat r(s,a)
=
\hat Q(s,a)
-
\int_{\mathcal A}\hat Q(s,a')\,\mu(da'\mid s)
+
g(s),
\]
with the final integral evaluated by the same quadrature or Monte Carlo rule.
\end{algorithmic}
\end{algorithm}

\clearpage

\end{document}